\renewcommand*{\backrefalt}[4]{%
    \ifcase #1 \footnotesize{(Not cited.)}%
    \or        \footnotesize{(Cited on page~#2.)}%
    \else      \footnotesize{(Cited on pages~#2.)}%
    \fi}
\long\def\comment#1{}
\newtheorem{theorem}{Theorem}[section]
\newtheorem{corollary}[theorem]{Corollary}
\newtheorem{lemma}[theorem]{Lemma}
\newtheorem{definition}{Definition}[section]
\newtheorem{example}{Example}[section]
\newtheorem{remark}{Remark}[section]
\newtheorem{assumption}{Assumption}[section]
\newcommand{\secref}[1]{Section~\ref{#1}}
\renewcommand{\eqref}[1]{Eq.~(\ref{#1})}
\newcommand{\thmref}[1]{Theorem~\ref{#1}}
\numberwithin{equation}{section}
\newcommand{\reals}{\mathbb{R}}
\newcommand{\E}{\mathbb{E}}
\newcommand{\NN}{\mathbb{N}}
\newcommand{\inner}[1]{\left\langle#1\right\rangle}
\newcommand{\EE}{\mathbb{E}}
\newcommand{\conv}{\textnormal{conv}}
\newcommand{\bb}{\mathbf b}
\newcommand{\x}{\mathbf x}
\newcommand{\y}{\mathbf y}
\newcommand{\s}{\mathbf s}
\newcommand{\g}{\mathbf g}
\newcommand{\h}{\mathbf h}
\newcommand{\e}{\mathbf e}
\newcommand{\z}{\mathbf z}
\newcommand{\w}{\mathbf w}
\newcommand{\W}{\mathbf W}
\newcommand{\su}{\mathbf u}
\newcommand{\sv}{\mathbf v}
\newcommand{\sxi}{\boldsymbol{\xi}}
\newcommand{\XCal}{\mathcal{X}}
\newcommand{\zero}{\mathbf{0}}
\newcommand{\br}{\mathbb{R}}
\newcommand{\Scal}{\mathcal{S}}
\newcommand{\norm}[1]{\|{#1} \|}
\newcommand{\calC}{\mathcal{C}}
\newcommand{\softrelu}{\mathrm{softrelu}}
\newcommand{\relu}{\mathrm{relu}}
\newcommand{\calR}{\mathcal{R}}
\newcommand{\Exp}{\mathbb{E}}
\renewcommand{\Pr}{\mathbb{P}}
\newcommand{\poly}{\mathrm{poly}}
\title{\bf{Deterministic Nonsmooth Nonconvex Optimization}}
\author{
\begin{tabular}{ccccc}
Michael I. Jordan$^{*}$
& Guy Kornowski$^\ddagger$
& Tianyi Lin$^{*}$
& Ohad Shamir$^{\ddagger}$
& Manolis Zampetakis$^{*}$
\end{tabular}
% \begin{tabular}{ccc}
% Michael I. Jordan$^{*}$
% & Guy Kornowski$^\ddagger$
% & Tianyi Lin$^{*}$
% \end{tabular}s
% \vspace*{.05in}
% \\
% \begin{tabular}{cc}
% Ohad Shamir$^{\ddagger}$
% & Manolis Zampetakis$^{*}$
% \end{tabular}
\vspace*{.2in}
\\
$^{*}$University of California, Berkeley \\
$^\ddagger$Weizmann Institute of Science
}
\date{}
\begin{document}

\maketitle

\begin{abstract}
We study the complexity of optimizing nonsmooth nonconvex Lipschitz functions by producing $(\delta,\epsilon)$-Goldstein stationary points. 
Several recent works have presented randomized algorithms that produce such points using $\widetilde{O}(\delta^{-1}\epsilon^{-3})$ first-order oracle calls, independent of the dimension $d$.  It has been an open problem as to whether a similar result can be obtained via a deterministic algorithm. We resolve this open problem, showing that randomization is necessary to obtain a dimension-free rate. In particular, we prove a lower bound of $\Omega(d)$ for any deterministic algorithm. 
Moreover, we show that unlike smooth or convex optimization, access to function values is required for any deterministic algorithm to halt within any finite time horizon.

On the other hand, we prove that if the function is even slightly smooth, then the dimension-free rate of $\widetilde{O}(\delta^{-1}\epsilon^{-3})$ can be obtained by a deterministic algorithm with merely a logarithmic dependence on the smoothness parameter.
Motivated by these findings, we turn to study the complexity of deterministically smoothing Lipschitz functions. Though there are well-known efficient black-box randomized smoothings, we start by showing that no such deterministic procedure can smooth functions in a meaningful manner (suitably defined), resolving an open question in the literature.
We then bypass this impossibility result for the structured case of ReLU neural networks. To that end, in a practical ``white-box'' setting in which the optimizer is granted access to the network's architecture, we propose a simple, dimension-free, deterministic smoothing of ReLU networks that provably preserves $(\delta,\epsilon)$-Goldstein stationary points. Our method applies to a variety of architectures of arbitrary depth, including ResNets and ConvNets.
Combined with our algorithm for slightly-smooth functions, this yields the first deterministic, dimension-free algorithm for optimizing ReLU networks, circumventing our lower bound.
\end{abstract}

\section{Introduction}

We consider the problem of optimizing a Lipschitz continuous function, $f:\reals^d\to\reals$, which is potentially not smooth nor convex, using a first-order algorithm which utilizes values and derivatives of the function at various points. The theoretical analysis of nonsmooth and nonconvex optimization has long been a focus of research in economics, control theory and computer science~\citep{Clarke-1990-Optimization, Makela-1992-Nonsmooth, Outrata-1998-Nonsmooth}. In recent years, this area has received renewed attention stemming from the fact that essentially all optimization problems associated with training modern neural networks are neither smooth nor convex, due to their depth and the ubiquitous use of rectified linear units (ReLUs), among other nonsmooth components \citep{Nair-2010-Rectified, Glorot-2011-Deep}.

Since the minimization of a Lipschitz function $f$ is well known to be intractable~\citep{Nemirovski-1983-Problem, Murty-1987-Some, Nesterov-2018-Lectures}, a local measure of optimality is required in order to obtain any reasonable guarantees. Accordingly, it is common to make use of the generalized gradient $\partial f(x)$ due to \citet{Clarke-1974-Necessary, Clarke-1975-Generalized, Clarke-1981-Generalized}, which is a natural generalization of the gradient and the convex subgradient \citep{Clarke-2008-Nonsmooth, Rockafellar-2009-Variational, Burke-2020-Gradient}, and seek points with small subgradient. Although under certain regularity assumptions it is possible to \emph{asymptotically} find an approximate Clarke stationary point of $f$,\footnote{Namely, $\x\in\br^d$ such that $\min\{\|\g\|: \g \in \partial f(\x)\} \leq \epsilon$.} the standard subgradient method fails to approach a Clarke stationary point of a Lipschitz function in general~\citep{Daniilidis-2020-Pathological}; moreover, it is not possible to find such points using any algorithm within finite time \citep[Theorem~1]{Zhang-2020-Complexity}. Moreover, even getting \textit{near} an approximate Clarke stationary point of a Lipschitz function has been proven to be impossible unless the number of queries has an exponential dependence on the dimension~\citep{Kornowski-2021-Oracle}. 
For an overview of relevant theoretical results in nonsmooth nonconvex optimization, we refer to Appendix \ref{sec:related_work}. 

These negative results motivate rethinking the definition of local optimality in terms of a relaxed yet still meaningful notion. To this end, we consider the problem of finding a $(\delta,\epsilon)$-Goldstein stationary point of $f$ \citep{Goldstein-1977-Optimization}, which are points for which there exists a convex combination of gradients in a $\delta$-neighborhood whose norm is less than $\epsilon$ (see \secref{sec:prelim} for a formal definition).
The breakthrough result of \citet{Zhang-2020-Complexity} proposed a randomized algorithm that finds such points with a dimension-free complexity of $\widetilde{O}(\delta^{-1}\epsilon^{-3})$ oracle calls.
Though they make use of a non-standard first-order oracle that does not apply to all Lipschitz functions, subsequent work~\citep{Davis-2022-Gradient, Tian-2022-Finite} has proposed variants of the algorithm that apply to any Lipschitz function using a standard first-order oracle.

It is important to note that all of the aforementioned algorithms are randomized. This state of affairs is unusual when contrasted with the regimes of smooth or convex optimization, where deterministic optimal dimension-free first-order algorithms exist and cannot be improved upon by randomized algorithms~\citep{Nesterov-2018-Lectures,Carmon-2021-Lower}. This raises a fundamental question:
\begin{quote}
\centering
\textit{
What is the role of randomization in dimension-free nonsmooth nonconvex optimization?} 
\end{quote}

\subsection{Our Contributions} \label{sec:contribution}

This paper presents several results on the complexity of finding $(\delta, \epsilon)$-Goldstein stationary points using deterministic algorithms, providing a detailed answer to the question raised above. Our contributions can be summarized as follows: 
\begin{enumerate}
\item \textbf{Necessity of randomness for dimension-free complexity (Theorem~\ref{Theorem:DET}).} 
We show that deterministic algorithms cannot find $(\delta, \epsilon)$-Goldstein stationary points at \emph{any} dimension-free rate, by proving a dimension-dependent lower bound of $\Omega(d)$ for any deterministic first-order algorithm, where $\delta, \epsilon > 0$ are smaller than given constants. 

\item \textbf{Deterministic algorithms require a zeroth-order oracle (Theorem~\ref{Theorem:1LB}).} In sharp contrast to smooth or convex optimization, we prove that without access to function values, no deterministic algorithm can guarantee to return a $(\delta,\epsilon)$-Goldstein stationary point within \emph{any} finite time, whenever $\delta, \epsilon > 0$ are smaller than given constants. On the other hand, we note that a gradient oracle is sufficient to obtain a finite-time guarantee using a randomized algorithm (Remark \ref{rem:randomized1stOracle}).

\item \textbf{Deterministic algorithm with logarithmic smoothness dependence (Theorem~\ref{Theorem:SMOOTH}).} Considering cases in which the objective function is slightly smooth, we present a deterministic first-order algorithm that finds a $(\delta,\epsilon)$-Goldstein stationary point of any $H$-smooth function within $\widetilde{O}(\log(H)\delta^{-1}\epsilon^{-3})$ oracle calls.\footnote{A function $f:\reals^d\to\reals$ is called $H$-smooth if for all $\x,\y\in\reals^d:\norm{\nabla f(\x)-\nabla f(\x)}\leq H\norm{\x-\y}$.}

\item \textbf{Deterministic smoothing (\thmref{thm:blackbox} and \thmref{thm:circuitSmoothing}).} We show that unlike randomized black-box smoothings, no deterministic black-box smoothing can produce a reasonable $\textnormal{poly}(d)$-smooth approximation using a dimension-free complexity, essentially solving an open question due to \citet{Kornowski-2021-Oracle}.
On the other hand, in a practical white-box model of ReLU neural networks, we propose a simple, dimension-free, deterministic smoothing procedure which applies to a variety of architectures, while provably maintaining the set of $(\delta,\epsilon)$-Goldstein stationary points. Combined with the algorithm described in the previous bullet, we obtain the first deterministic, dimension-free algorithm for optimizing ReLU networks, circumventing our aforementioned lower bound.
\end{enumerate}

\paragraph{Related work.} 
Following an initial publication of our results, \citet{Tian-2022-No} have independently presented an alternative proof of our first result (\thmref{Theorem:DET}).
A more detailed account of previous results in nonsmooth nonconvex optimization is deferred to Appendix \ref{sec:related_work}.

\section{Preliminaries and Technical Background}\label{sec:prelim}

\paragraph{Notation.} We denote $[d] := \{1, 2, \ldots, d\}$. We denote by $\zero_d \in \br^d$ the zero vector and by $\e_1, \e_2, \ldots, \e_d\in\reals^d$ the standard basis vectors.
For any vector $\x \in \br^d$, we let $\|\x\|$ be its Euclidean norm, and denote by $x_i$ its $i^\textnormal{th}$ coordinate.
% We denote $\supp\{\x\} = \{i \in [d]: x_i \neq 0\}$ as the support of $\x$ (i.e., a collection of nonzero indices).
For a set $\XCal \subseteq \br^d$, we let $\textnormal{conv}(\XCal)$ denote its convex hull. For a continuous function $f(\cdot): \br^d \mapsto \br$, we let $\nabla f(\x)$ denote the gradient of $f$ at $\x$ (if it exists). For a scalar $a \in \br$, we let $\lfloor a \rfloor$ and $\lceil a\rceil$ be the smallest integer that is larger than $a$ and the largest integer that is smaller than $a$. In addition, we denote a closed ball of radius $r>0$
around a point $\x \in \br^d$ by $B_{r}(\x) := \{\y \in \br^d: \|\y - \x\| \leq r\}$.
Given a bounded segment $I\subset \reals$, we denote by $\xi\sim U(I)$ a random variable distributed uniformly over $I$.
Finally, we use the standard big-O notation, with $O(\cdot)$, $\Theta(\cdot)$ and $\Omega(\cdot)$ hiding absolute constants that do not depend on problem parameters, $\tilde{O}(\cdot)$ and $\tilde{\Omega}(\cdot)$ hiding absolute constants and additional logarithmic factors, and also denote by $\poly(\cdot)$ polynomial factors.

\paragraph{Nonsmooth analysis.}
We call a function $f:\reals^d\to\reals$ $L$-Lipschitz if for any $\x,\y\in\reals^d:|f(\x)-f(\y)|\leq L\norm{\x-\y}$, and $H$-smooth if it is differentiable and $\nabla f:\reals^d\to\reals^d$ is $H$-Lipschitz, namely for any $\x,\y\in\reals^d:\norm{\nabla f(\x)-\nabla f(\y)}\leq H\norm{\x-\y}$.
By Rademacher's theorem, Lipschitz functions are differentiable almost everywhere (in the sense of Lebesgue). Hence, for any Lipschitz function $f:\reals^d\to\reals$ and point $\x\in\reals^d$ the Clarke subgradient set \citep{Clarke-1990-Optimization} can be defined as
\[
\partial f(\x):=\conv\{\g\,:\,\g=\lim_{n\to\infty}\nabla f(\x_n),\,\x_n\to \x\}~,
\]
namely, the convex hull of all limit points of $\nabla f(\x_n)$ over all sequences of differentiable points which converge to $\x$.
Note that if the function is continuously differentiable at a point or convex, the Clarke subdifferential reduces to the gradient or subgradient in the convex analytic sense, respectively.
We say that a point $\x$ is an $\epsilon$-Clarke stationary point of $f(\cdot)$ if $\min\{\norm{\g}:\g\in\partial f(\x)\}\leq\epsilon$.
Furthermore, given $\delta>0$ the Goldstein $\delta$-subdifferential \citep{Goldstein-1977-Optimization} of $f$ at $\x$ is the set
\[
\partial_{\delta}f(\x):=\conv\left(\cup_{\y\in B_{\delta}(\x)}\partial f(\y)\right)~,
\]
namely all convex combinations of gradients at points in a $\delta$-neighborhood of $\x$. We say that a point $\x$ is a $(\delta,\epsilon)$-Goldstein stationary point of $f(\cdot)$ if
\[
\min\{\norm{\g}:\g\in\partial_{\delta} f(\x)\}\leq\epsilon~.
\]
Note that a point is $\epsilon$-Clarke stationary if and only if it is $(\delta,\epsilon)$-Goldstein stationary for all $\delta>0$ \citep[Lemma 7]{Zhang-2020-Complexity}.

\paragraph{Algorithms and complexity.}
Throughout this work we consider iterative first-order algorithms, from an oracle complexity perspective \citep{Nemirovski-1983-Problem}. Such an algorithm first produces an initial point $\x_0\in\reals^d$ (possibly at random, if it is a randomized algorithm) and receives $(f(\x_0),\partial f(\x_0))$.\footnote{For the purpose of this work it makes no difference whether the algorithm gets to see some subgradient or the whole Clarke subgradient set. That is, the lower bounds to follow hold even if the algorithm has access to the \emph{entire} subgradient set, while the upper bounds hold even if the algorithm receives a single arbitrary subgradient.} Then, for any $t\geq1$ produces $\x_t$ possibly at random based on previously observed responses, and receives $(f(\x_t),\partial f(\x_t))$. We are interested in the minimal number $T$ for which we can guarantee to produce some $(\delta,\epsilon)$-Goldstein stationary point, uniformly over the class of Lipschitz functions.

\section{Lower bounds for deterministic algorithms}\label{sec:lower}

\subsection{Dimension-dependent lower bound}

As discussed earlier, \citep{Zhang-2020-Complexity,Davis-2022-Gradient,Tian-2022-Finite} have presented randomized first-order algorithms that given any $L$-Lipschitz function $f:\reals^d\to\reals$ and an initial point $\x_0$ that satisfies $f(\x_0)-\inf_{\x}f(\x)\leq \Delta$, produce a $(\delta,\epsilon)$-Goldstein stationary point of $f$ within $\widetilde{O}(\Delta L^2/\delta\epsilon^3)$ oracle calls to $f$. We show that this rate, and indeed any dimension-free rate, cannot be achieved by a deterministic algorithm.

\begin{theorem} \label{Theorem:DET}
For any $\Delta,L>0,~d\geq3$, any $T\leq d-2$ and any deterministic first-order algorithm, there exists an $L$-Lipschitz function $f:\reals^d\to\reals$ such that $f(\x_0)-\inf_{\x}f(\x)\leq \Delta$, yet the first $T$ iterates produced by the algorithm when applied to $f$ are not $(\delta,\epsilon)$-stationary points for any $\delta<\frac{\Delta}{L},\,\epsilon<\frac{L}{252}$.
\end{theorem}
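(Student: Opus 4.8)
The plan is to run a resisting-oracle (adaptive adversary) argument, which is the natural tool against a \emph{deterministic} algorithm. Fix $\Delta,L>0$, $d\geq 3$, $T\leq d-2$, and a deterministic first-order method $\Acal$. Rather than committing to $f$ up front, I specify a family of $L$-Lipschitz functions together with a rule for answering $\Acal$'s queries $\x_0,\x_1,\ldots$ consistently with \emph{some} member of the family; after the first $T$ rounds I finalize a particular $f$ in the family that (i) agrees with every answer given, (ii) satisfies $f(\x_0)-\inf_{\x}f(\x)\leq\Delta$, and (iii) has the property that none of $\x_0,\ldots,\x_{T-1}$ is a $(\delta,\epsilon)$-Goldstein stationary point in the stated ranges. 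Since $\Acal$ is deterministic, the transcript it produces against the adversary is identical to the one it produces against the finalized $f$, so this suffices. It is convenient to re-center at $\x_0$ (replace $\x$ by $\x-\x_0$), so that $\x_0$ is the origin of the construction; this makes the boundedness bound insensitive to where $\Acal$ chooses to start.

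The family will encode a hidden ``$\Omega(d)$-complexity'' object sitting near the origin inside the small box $\{\x:\norm{\x}_\infty\leq\Delta/L\}$ -- think of a target point $\p$, i.e. a list of $d$ coordinate values -- through a function built from coordinate-indexed pieces whose value and (sub)gradient at any \emph{single} point leak only $O(1)$ coordinates of $\p$, while Goldstein-$\delta$-stationarity at a point $\x$ forces $\x$ to agree with $\p$ up to $O(\delta)$ in \emph{every} coordinate, hence forces essentially full knowledge of $\p$. The adversary answers query $\x_t$ by a value and subgradient tied to an ``active'' coordinate: if that coordinate is not yet pinned down, it commits its $\p$-value -- chosen, subject to consistency with all previously reported values, to stay in the box and to lie far from the corresponding coordinates of the iterates produced so far -- and otherwise it replies with already-committed data. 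Because $T\leq d-2$, after all $T$ rounds at least two coordinates of $\p$ are still free, and the adversary uses this slack when finalizing $\p$ to place it so that each of $\x_0,\ldots,\x_{T-1}$ differs from $\p$ by more than $\delta$ in some coordinate. One then checks separately that $f$ is $L$-Lipschitz (a pointwise maximum of functions whose gradients have Euclidean norm $\leq L$), that $\inf f$ is attained ``at the bottom'' with $f(\x_0)-\inf f\leq\Delta$ (this is exactly where $\norm{\p}_\infty\le\Delta/L$ enters, together with $\delta<\Delta/L$), and that all answers are jointly realizable (the adversary never gets stuck because the unrevealed coordinates live in a box whose contribution to every earlier reported value is bounded in advance).

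The step I expect to be the main obstacle -- and the one that pins down the precise form of the hard instance -- is establishing (iii) robustly against \emph{every} $(\le d-2)$-round strategy. The danger is that an algorithm might manufacture a small Goldstein subdifferential not by localizing $\p$, but by recombining the subgradients it has already collected: if a naive construction makes those subgradients many near-orthogonal unit-scale vectors, then their convex hull contains a vector of norm $\sim L/\sqrt{t}$ after $t$ queries, which falls below $\epsilon$ once $t$ exceeds a dimension-independent constant, collapsing the $\Omega(d)$ bound. The instance must therefore be arranged so that the information extractable in $\le d-2$ rounds provably cannot be assembled into a certificate of $(\delta,\epsilon)$-stationarity -- equivalently, so that along any transcript the Goldstein $\delta$-subdifferential at each of the first $T$ iterates is provably $\epsilon$-far from the origin. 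Proving this, while simultaneously keeping the oracle answers consistent and the optimality gap within $\Delta$, is the heart of the argument, and the looseness in these estimates is what yields the crude constants (the $252$ in $\epsilon<L/252$ and the ``$-2$'' in $T\le d-2$). Once the instance is in hand, the remaining work -- the Lipschitz constant, the gap bound, and the explicit subdifferential computations at the iterates -- is routine.
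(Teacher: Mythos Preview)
Your resisting-oracle framework is the right starting point, and you correctly identify the central danger: if the algorithm is fed many near-orthogonal unit subgradients, it can manufacture a small convex combination and certify $(\delta,\epsilon)$-stationarity without localizing anything. But your plan does not resolve this obstacle --- it merely names it --- and the coordinate-by-coordinate revelation scheme you describe walks directly into it. Hiding a target point $\p\in\reals^d$ behind a max-of-coordinates function and committing one coordinate of $\p$ per query does hand the algorithm $t$ orthogonal subgradients after $t$ rounds; and more to the point, near any point where several coordinates are close to achieving the max, the Goldstein $\delta$-subdifferential of such an $f$ \emph{genuinely} contains small vectors. You would then need a separate argument that the iterates never land near such points, which is exactly the ``heart of the argument'' you have left open.

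The paper's route is simpler and sidesteps this issue by design. The resisting oracle is \emph{constant}: at every query it returns $f(\x_t)=0$, $\nabla f(\x_t)=\e_1$. Because the algorithm is deterministic this already fixes the entire trajectory $\x_0,\ldots,\x_T$ before any commitment is made, so there is no adaptive bookkeeping of revealed coordinates. The hidden object is not a $d$-dimensional point but a \emph{single} unit vector $\sv\in(\mathrm{span}\{\e_1,\x_1,\ldots,\x_T\})^\perp$, which exists precisely because $T\le d-2$. The hard function is then essentially $\max\{\tfrac{L}{7}\,\sv^\top\x,\,-\Delta\}$, with a tiny patch of radius $r$ around each $\x_t$ interpolating toward $\x\mapsto\e_1^\top(\x-\x_t)$ to make the constant oracle answers consistent. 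The key lemma is global: every gradient of the unclipped function lies in an explicit two-parameter family built from $\sv$, $\e_1$, and $\x-\x_t$, and a direct calculation shows that \emph{any} convex combination of these has norm exceeding $1/36$. Thus the construction has no $(\delta,1/36)$-stationary points \emph{anywhere}, not just at the iterates --- which is what dissolves the convex-combination worry, since only the two directions $\sv,\e_1$ (and controlled interpolants) are ever in play. The clipping at $-\Delta$ and the factor $L/7$ then deliver the Lipschitz and gap bounds and the final constant $252=7\cdot 36$.

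So the gap in your proposal is not the framework but the instance: replace ``hide a point and reveal it coordinate-by-coordinate'' with ``hide one direction and answer identically every time'', and the obstacle you flagged evaporates.
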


Our result highlights that even though finding a $(\delta, \epsilon)$-Goldstein stationary point in nonsmooth nonconvex optimization is computationally tractable using a randomized algorithm, it is essentially harder than finding an $\epsilon$-stationary point in smooth nonconvex optimization without randomization as it requires $\Omega(d)$ oracle calls. 
We also note that \thmref{Theorem:DET} holds true regardless of the relationship between the dimension $d$ and the parameters $(\delta, \epsilon)$, 
in contrast to the dimension-independent lower bounds established for nonsmooth convex optimization~\citep{Nesterov-2018-Lectures}, where the accuracy parameter must scale polynomially with $1/d$.

The full proof of \thmref{Theorem:DET} is deferred to Section \ref{d_lower_proof}, though we will now provide a proof sketch. For any deterministic first-order algorithm, if an oracle can always return the ``uninformative'' answer $f(\x_t)=0,\nabla f(\x_t)=\e_1$ this fixes the iterates $\x_1,\dots,\x_T$. Hence, it remains to construct a Lipschitz function that will be consistent with the oracle answers, yet all the queried points are not $(\delta,\epsilon)$-stationary. To that end, we construct a function which in a very small neighborhood of each queried point $\x_t$ looks like $\x\mapsto \e_1^\top(\x-\x_t)$, yet in most of the space looks like $\x\mapsto\max\{\sv^\top \x,-1\}$, which has $(\delta,\epsilon)$-stationary points only when $\x$ is correlated with $-\sv$. By letting $\sv$ be some vector which is orthogonal to all the queried points (which is possible as long as $T<d-1$), we obtain the result.

This construction relies crucially on the function being highly nonsmooth---essentially interpolating between two orthogonal linear functions in an arbitrarily small neighborhood.
As it will turn out, if the function to be optimized is even slightly smooth, then the theorem can be bypassed, as we will show in \secref{sec:alg}.

\subsection{Lower bound for gradient-only oracle}\label{sec:grad}

In this section, we demonstrate the importance of having access either to randomness or to a zeroth-order oracle, namely to the function value. 
In particular, we prove that any deterministic algorithm which has access only to a gradient oracle cannot return an approximate Goldstein stationary point within any finite number of iterations.

\begin{theorem} \label{Theorem:1LB}
For any $0 < \delta < \epsilon < 1$, any $d\in\NN,~T<\infty$, and any deterministic algorithm which has access only to a gradient oracle, there exists a $1$-Lipschitz function $f : \br^d \to [-1, 1]$ such that the algorithm cannot guarantee to return a $(\delta, \epsilon)$-Goldstein stationary point using $T$ oracle calls.
\end{theorem}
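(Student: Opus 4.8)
The plan is to exploit the fact that a gradient-only oracle cannot distinguish between two functions that agree in their derivatives everywhere the algorithm has queried, even if they differ dramatically in function value. Since the algorithm is deterministic, against the ``uninformative'' gradient response $\nabla f(\x_t) = \e_1$ at every query, its entire trajectory $\x_0, \x_1, \dots, \x_T$ is fixed in advance. I would then argue that there exist two $1$-Lipschitz functions $f_1, f_2 : \br^d \to [-1,1]$, both consistent with the responses $\nabla f(\x_t) = \e_1$ at all of $\x_0, \dots, \x_T$, such that no single point can be a $(\delta,\epsilon)$-Goldstein stationary point of both. Since the algorithm's output is determined by the (identical) transcripts, whatever point it returns fails for at least one of $f_1, f_2$, and the adversary picks that one.

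The key step is constructing these two functions. The natural choice is $f_1(\x) = x_1$ and $f_2(\x) = -x_1$ (clipped to $[-1,1]$ to keep the range bounded, say $f_i(\x) = \mathrm{clip}_{[-1,1]}(\pm x_1)$, which is still $1$-Lipschitz). The obstruction is that near the query points we need $\nabla f_i = \e_1$ for \emph{both} $i$, which forces us to locally override the $-x_1$ behavior of $f_2$. So I would take $f_2$ to equal $x_1$ on tiny disjoint neighborhoods $N_t = B_\rho(\x_t)$ of the queried points (with $\rho$ small enough that the $N_t$ are pairwise disjoint), and equal $-x_1$ (clipped) far from all queries, interpolating in between in a $1$-Lipschitz way — this is the same ``local patching'' idea used in the sketch of \thmref{Theorem:DET}, and it is carried out by taking a max/min of the two linear pieces with appropriate affine offsets, or by a Lipschitz partition-of-unity-style construction. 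Because the patches have radius $\rho \to 0$, away from a measure-zero set $f_2$ genuinely looks like $\pm x_1$, and one checks directly that $\partial_\delta f_1(\x) = \{\e_1\}$ and $\partial_\delta f_2(\x) = \{-\e_1\}$ for every $\x$ outside a $(\delta+\rho)$-neighborhood of the finite query set; for such $\x$, $\min\{\norm{\g} : \g \in \partial_\delta f_i(\x)\} = 1 > \epsilon$, so $\x$ is not a $(\delta,\epsilon)$-Goldstein point of $f_i$. One must also handle the case where the returned point lies inside one of the patched neighborhoods — but there $\partial_\delta f_1$ and $\partial_\delta f_2$ both fail to contain a small vector as long as $\delta + \rho$ is small relative to the patch geometry, or one simply shrinks $\rho$ and notes there are only finitely many ($T+1$) such neighborhoods, each of vanishing radius, so the adversary can even ensure the returned point avoids them. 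Either way, the point returned by the algorithm is non-stationary for $f_1$ or for $f_2$.

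The main obstacle, I expect, is making the patched function $f_2$ simultaneously (i) genuinely $1$-Lipschitz, (ii) equal to the gradient-consistent function $x_1$ on each $N_t$ with matching derivative (not just matching value) so the oracle truly cannot tell $f_1$ from $f_2$, and (iii) bounded in $[-1,1]$, all at once; the clipping and the interpolation regions interact and need a careful but routine choice of scales ($\rho$ tiny, the transition from $+x_1$ to $-x_1$ spread over a region of width comparable to the separation between a patch and the clipping boundary). A clean way to finesse (ii) is to demand only that $f_2$ agrees with $f_1$ on all of $N_t$ including an open neighborhood, so that all orders of derivatives match and in particular the Clarke subgradient sets coincide on $N_t$; then no gradient oracle — indeed no oracle returning any local-at-a-point information other than function values — can separate them, which is exactly the dichotomy the theorem asserts (randomness or zeroth-order access). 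Finally I would double-check the quantifier order: the theorem fixes $\delta < \epsilon < 1$ and $d, T$ first, then produces $f$, so the construction must only use that $\epsilon < 1$ (to get $1 > \epsilon$) and may choose $\rho = \rho(\delta, T, \text{trajectory})$ freely after seeing the algorithm's deterministic transcript.
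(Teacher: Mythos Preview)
There is a genuine gap. Your claim that ``$\partial_\delta f_1(\x) = \{\e_1\}$ and $\partial_\delta f_2(\x) = \{-\e_1\}$ for every $\x$ outside a $(\delta+\rho)$-neighborhood of the finite query set'' ignores the clipping you yourself introduced to force the range into $[-1,1]$. Once clipped, both $f_1=\mathrm{clip}_{[-1,1]}(x_1)$ and $f_2=\mathrm{clip}_{[-1,1]}(-x_1)$ are constant on $\{|x_1|>1\}$, so any point with $|x_1|>1+\delta$ is a $(\delta,0)$-Goldstein stationary point of \emph{both} functions simultaneously. The algorithm --- not the adversary --- controls $\hat{\x}$, so nothing stops it from returning, say, $(2,0,\dots,0)$ regardless of the oracle responses, defeating your pair. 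More generally, every $1$-Lipschitz $f:\br\to[-1,1]$ must have $(\delta,\epsilon)$-stationary points (the running average $g(x)=\tfrac{1}{2\delta}(f(x+\delta)-f(x-\delta))\in\partial_\delta f(x)$ is continuous with bounded integral, hence small somewhere), so a two-function argument would have to arrange the stationary sets of $f_1,f_2$ to be \emph{disjoint}; your symmetric $\pm x_1$ pair does not do this.

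The paper's proof exploits determinism more directly: since the transcript fixes not only $\x_0,\dots,\x_T$ but also the output $\hat{\x}$, the adversary constructs a \emph{single} function tailored to $\hat{\x}$. Working in $d=1$, it sets $f(x)=x-\hat{\x}$ on an interval of length just over $2\delta$ centered at $\hat{\x}$ (so $\partial_\delta f(\hat{\x})=\{1\}$ and $\hat{\x}$ is not $(\delta,\epsilon)$-stationary), makes $f$ constant far from all queries and $\hat{\x}$ (which keeps $|f|<1$ since the constant is roughly $\pm\delta$), and inserts tiny slope-$+1$ bumps at each $\x_t$ to match the resisting oracle. The bounded-range requirement forces stationary points to exist somewhere, but knowing $\hat{\x}$ lets the adversary place them elsewhere; your two-function scheme tries to avoid committing to $\hat{\x}$ and that is exactly where it breaks.
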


We will now sketch the proof; see Section \ref{inf_lower_proof} for the full proof.
We can assume without loss of generality that $d=1$ (otherwise we can simple apply the ``hard'' construction to the first coordinate).
Suppose a deterministic algorithm has access only to a derivative oracle, which always returns the ``uninformative'' answer $f'(\x_t)=1$. This fixes the algorithm's iterates $\x_1,\dots,\x_T$, which then attempts to guarantee that some returned point $\hat{\x}$ is a $(\delta,\epsilon)$-stationary point. It remains to construct a Lipschitz function that will be consistent with the oracle answers, yet $\hat{\x}$ will not be $(\delta,\epsilon)$-stationary. To that end, we construct a function which looks like $\x\mapsto \x-\hat{\x}$ in a long enough segment around $\hat{\x}$, ensuring it is indeed not $(\delta,\epsilon)$-stationary. On the other hand,
in a very small neighborhood of each queried point $\x_t$ we add a ``bump'' so that the function looks like $\x\mapsto \x-\x_t$, consistent with our resisting oracle. Finally, far away from all queried points we let the function be constant, so that its image remains in $[-1,1]$.

\begin{remark} \label{rem:randomized1stOracle}
In contrast with \thmref{Theorem:1LB}, there exist randomized algorithms that access only a gradient oracle and guarantee to return a $(\delta,\epsilon)$-Goldstein stationary point in finite-time.
Indeed,~\citet[Theorem~3.1]{Lin-2022-Gradient} have shown that for $f_\delta(\x) := \EE_{\su \sim B_\delta(\x)}[f(\su)]$ it holds that
$\nabla f_\delta(\x) = \EE_{\su \sim B_\delta(\x)}[\nabla f(\su)] \in \partial_\delta f(\x)$ (where $\su \sim B_\delta(\x)$ is distributed uniformly over a Euclidean ball of radius $\delta$ centered at $\x$), thus it suffices to find an $\epsilon$-stationary point of $f_\delta$. But since $\nabla f(\su)$ is an unbiased estimator of $\nabla f_\delta(\x)$ and $\|\nabla f(\su)\| \leq L$, this is well known to be possible using stochastic gradient descent \citep{ghadimi2013stochastic}. In particular, the same argument as in the proof of \citet[Theorem~3.2]{Lin-2022-Gradient} shows that it is possible to find such a point within $O(\sqrt{d}(L^4\epsilon^{-4} + \Delta L^3\delta^{-1}\epsilon^{-4}))$ calls to a gradient oracle.
\end{remark}

\begin{remark}
    Note that in nonsmooth convex optimization or in smooth nonconvex optimization, a deterministic algorithm can obtain $(\delta,\epsilon)$-Goldstein stationary points using only a gradient oracle, even at a dimension-free rate. Indeed, in the nonsmooth convex case gradient descent returns $\x$ such that $f(\x)-\inf_{\x}f(\x)<\delta\epsilon$ within $O(\delta^{-2}\epsilon^{-2})$ gradient evaluations, and any such point is in particular a $(\delta,\epsilon)$-Goldstein stationary point.\footnote{Otherwise, let $\g$ be the minimal norm element in $\partial_{\delta}f(\x)$, and assume by contradiction that $\norm{\g}>\epsilon$. Then \citet{Goldstein-1977-Optimization} ensures that $f(\x-\frac{\delta}{\norm{\g}}\g)\leq f(\x)-\delta\norm{\g}<f(\x)-\delta\epsilon<\inf_{\x}f(\x)$ which is a contradiction.} Similarly, in the smooth nonconvex setting gradient descent returns an $\epsilon$-stationary point within $O(\epsilon^{-2})$ gradient evaluations, which is trivially also a $(\delta,\epsilon)$-Goldstein stationary point.
\end{remark}

\section{Deterministic algorithm for slightly smooth functions} \label{sec:alg}

In this section we show that if the objective function is even slightly smooth, then the dimension free rate of $\widetilde{O}(\delta^{-1}\epsilon^{-3})$ can be obtained by a deterministic first-order algorithm, incurring a mild logarithmic dependence on the smoothness parameter.

\begin{theorem}\label{Theorem:SMOOTH}
Suppose $f:\reals^d\to\reals$ is $L$-Lipschitz, $H$-smooth, and $\x_0\in\reals^d$ is such that $f(\x_0)-\inf_{\x}f(\x)\leq\Delta$. Then \textsc{Deterministic-Goldstein-SG}$(\x_0,\delta,\epsilon)$ (Algorithm \ref{Alg:Full}) is a deterministic first-order algorithm that given any $\delta,\epsilon\in(0,1)$ returns a $(\delta, \epsilon)$-Goldstein stationary point of $f$ within $T=O\left(\frac{\Delta L^2\log(HL\delta/\epsilon)}{\delta\epsilon^3}\right)$ oracle calls.
\end{theorem}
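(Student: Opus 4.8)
The plan is to follow the randomized template of \citet{Zhang-2020-Complexity}, but to replace the one place where randomization is essential---the subroutine that, given a current point $\x$, either certifies that $\x$ is $(\delta,\epsilon)$-Goldstein stationary or outputs a descent direction $\g\in\partial_\delta f(\x)$ with $\norm{\g}$ bounded below---by a \emph{deterministic} procedure that exploits the $H$-smoothness. Recall the high-level structure: one maintains a point $\x$ with a potential given by $f(\x)-\inf f$, which starts at $\Delta$; as long as $\x$ is not yet $(\delta,\epsilon)$-stationary, one finds $\g\in\partial_\delta f(\x)$ with $\norm{\g}\geq \epsilon/2$ (say), and the Goldstein lemma guarantees that stepping to $\x-\frac{\delta}{\norm{\g}}\g$ decreases $f$ by at least $\Omega(\delta\epsilon)$. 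Hence there are at most $O(\Delta/(\delta\epsilon))$ outer iterations, and it remains to bound the oracle cost of each inner call; the target $\widetilde O(\Delta L^2\log(HL\delta/\epsilon)/(\delta\epsilon^3))$ then forces each inner call to cost $\widetilde O(L^2/\epsilon^2)$ up to the logarithmic smoothness factor.

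First I would set up the inner loop. Fixing $\x$, consider the ``Goldstein subdifferential'' restricted to the segment: following \citet{Zhang-2020-Complexity}, given a candidate direction we probe $f$ along the segment $[\x,\x-\delta\g/\norm{\g}]$ and, using function values, either find a point where $f$ decreased enough (in which case we are done with this outer step) or find two nearby points along the segment whose gradients, when we update the running convex combination $\bar\g$ via the Frank--Wolfe/min-norm-point step $\bar\g\leftarrow (1-\lambda)\bar\g+\lambda\nabla f(\y)$, provably shrink $\norm{\bar\g}^2$ by a multiplicative/additive amount. The standard analysis gives $\norm{\bar\g}^2$ decreasing like $O(L^2/k)$ after $k$ such updates, hence $O(L^2/\epsilon^2)$ inner iterations suffice to reach $\norm{\bar\g}\leq\epsilon$. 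The subtlety in the randomized version is that to \emph{find} the ``bad pair'' of points along the segment one samples a uniform point; deterministically we cannot, because the set of points where the gradient is uninformative can be adversarial.

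This is exactly where $H$-smoothness is used, and it is the main obstacle to get right. The key observation is that if $f$ is $H$-smooth, then along the one-dimensional segment the function $\phi(s):=f(\x-s\g/\norm{\g})$ is $H$-smooth as a univariate function, so $\phi'$ is $H$-Lipschitz; therefore if $\phi(\delta)$ is \emph{not} much smaller than $\phi(0)$ while $\phi'(0)=-\inner{\g,\g}/\norm{\g}<-\epsilon/2<0$, a mean-value / Lipschitz argument on $\phi'$ pins down---\emph{deterministically}---a sub-segment on which $\phi'$ transitions, and hence a point $\y$ whose gradient $\nabla f(\y)$ has a controlled inner product against $\bar\g$, giving the Frank--Wolfe decrease. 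Concretely, I would run a bisection/grid search on the segment at resolution tied to $\epsilon/H$: the smoothness ensures that adjacent grid gradients differ by at most $H\cdot(\text{spacing})\le \epsilon/\text{poly}$, so the ``informative direction'' we extract is not washed out by discretization, and the number of grid points is $O(\log(H\delta/\epsilon))$ per binary search, $O((HL\delta/\epsilon)\cdot\text{something})$ naively but brought down to $O(\log(\cdot))$ by bisecting on the sign/magnitude of $\inner{\bar\g,\nabla f(\cdot)}$ rather than scanning. Assembling: $O(\Delta/(\delta\epsilon))$ outer steps, $O(L^2/\epsilon^2)$ Frank--Wolfe updates per step, and $O(\log(HL\delta/\epsilon))$ oracle calls to locate each update via smoothness-enabled bisection, multiplies to the claimed $O(\Delta L^2\log(HL\delta/\epsilon)/(\delta\epsilon^3))$.

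The hard part will be making the deterministic line-search extraction fully rigorous: one must argue that at \emph{every} Frank--Wolfe step, \emph{either} the function value along the current segment drops by the required $\Omega(\delta\epsilon)$ (terminating the outer iteration), \emph{or} a grid point $\y$ on the segment satisfies $\inner{\nabla f(\y),\bar\g}\le \norm{\bar\g}^2 - c\,\epsilon^2$ for an absolute constant $c$, with the $O(H\delta^2)$ discretization error from smoothness absorbed into this slack; and one must verify that the extracted $\y$ lies in $B_\delta(\x)$ so that $\nabla f(\y)\in\partial_\delta f(\x)$ and the final output $\bar\g$ is a genuine certificate. I expect the remaining pieces---the outer potential argument via \citet{Goldstein-1977-Optimization}, and the $O(L^2/\epsilon^2)$ Frank--Wolfe rate---to be routine.
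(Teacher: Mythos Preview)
Your proposal is correct and matches the paper's approach essentially step for step: the outer potential argument giving $O(\Delta/(\delta\epsilon))$ descent steps, the Frank--Wolfe/min-norm inner loop costing $\widetilde O(L^2/\epsilon^2)$ updates, and---crucially---the deterministic binary search on the segment, enabled by $H$-smoothness of the one-dimensional restriction, costing $O(\log(H\delta/\epsilon))$ oracle calls per update. The only minor implementation difference is that the paper's bisection compares \emph{function values} on the two halves of the segment (using the fundamental theorem of calculus to translate ``smaller average of $\phi'$'' into ``smaller increment of $\phi$''), rather than directly evaluating $\langle\bar\g,\nabla f(\cdot)\rangle$ at the midpoint; this yields the clean guarantee $\g_{\textnormal{new}}^\top\bar\g\le\tfrac34\norm{\bar\g}^2$, which then feeds into a multiplicative shrinkage $\norm{\bar\g}^2\leftarrow(1-\Theta(\epsilon^2/L^2))\norm{\bar\g}^2$ rather than the $O(L^2/k)$ additive rate you quote, but both give the same $\widetilde O(L^2/\epsilon^2)$ count.
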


The idea behind this de-randomization is to replace a certain randomized line search in the algorithms of \citet{Zhang-2020-Complexity,Davis-2022-Gradient,Tian-2022-Finite}, which in turn are based on \citet{Goldstein-1977-Optimization},
with a deterministic binary search subroutine, Algorithm \ref{Alg:BS}.
This subroutine terminates within ${O}(\log(H\delta/\epsilon))$ steps provided that the function is $H$-smooth. We note that such a procedure was derived by \citet{Davis-2022-Gradient} for any $H$-weakly convex function along differentiable directions, and since any $H$-smooth function is $H$-weakly convex and differentiable along any direction, this can be applied in an identical manner.
Although this algorithmic ingredient appears inside a proof of \citet{Davis-2022-Gradient}, they use it in a different manner in order to produce a \emph{randomized} algorithm for weakly convex functions in low dimension, with different guarantees suitable for that setting.
We defer the full analyses of Algorithm \ref{Alg:BS}, Algorithm \ref{Alg:Full} which lead to the proof of
\thmref{Theorem:SMOOTH} to Section \ref{alg_proof}.

\begin{algorithm}[!t]
\begin{algorithmic}\caption{\textsc{Binary-Search}($\delta$, $\nabla f(\cdot)$, $\g_0$, $\x$)}\label{Alg:BS}
\STATE \textbf{Initialization:} Set $b \leftarrow \delta$, $a \leftarrow 0$ and $t \leftarrow b$.
\WHILE{$-\nabla f(\x - t\tfrac{\g_0}{\|\g_0\|})\cdot\tfrac{\g_0}{\|\g_0\|} + \tfrac{1}{2}\|\g_0\|\geq -\frac{\epsilon}{4}$}
\STATE Set $t \leftarrow \frac{a+b}{2}$.  
% \ENDIF
\IF{$f(\x - b\tfrac{\g_0}{\|\g_0\|})  + \tfrac{b}{2}\|\g_0\| > f(\x - t\tfrac{\g_0}{\|\g_0\|})  + \tfrac{t}{2}\|\g_0\|$}
\STATE Set $a \leftarrow t$.  
\ELSE
\STATE Set $b \leftarrow t$. 
\ENDIF
\ENDWHILE
\STATE \textbf{Output:} $\nabla f(\x - t\frac{\g_0}{\|\g_0\|})$. 
\end{algorithmic}
\end{algorithm}

\begin{algorithm}[!t]
\begin{algorithmic}[1]\caption{\textsc{Deterministic-Goldstein-SG}$(\x_0,\delta,\epsilon)$}\label{Alg:Full}
\STATE \textbf{Input:} initial point $\x_0 \in \br^d$, accuracy parameters $\delta, \epsilon \in (0, 1)$. 
\FOR{$t = 0, 1, 2, \ldots, T-1$}
\STATE Set $\g(\x_t)\leftarrow\nabla f(\x_t)$.
\WHILE{$f(\x_t - \delta\tfrac{\g(\x_t)}{\|\g(\x_t)\|}) - f(\x_t) > - \tfrac{\delta}{2}\|\g(\x_t)\|$ and $\|\g(\x_t)\| > \epsilon$}
\STATE Set $\g_{\textnormal{new}} \leftarrow \textsc{Binary-Search}(\delta, \nabla f(\cdot), \g(\x_t), \x_t)$. 
\STATE $\h_t\leftarrow\arg\min\{\norm{\g(\x_t)+\lambda(\g_{\textnormal{new}}-\g(\x_t))}:0\leq\lambda\leq1\}$.
\STATE $\g(\x_t)\leftarrow\h_t$.
\ENDWHILE
\IF{$\|\g(\x_t)\| \leq \epsilon$}
\STATE \textbf{Stop}.
\ELSE
\STATE $\x_{t+1} \leftarrow \x_t - \delta\frac{\g(\x_t)}{\|\g(\x_t)\|}$. 
\ENDIF
\ENDFOR
\STATE \textbf{Output:} $\x_t$. 
\end{algorithmic}
\end{algorithm}

\section{Deterministic smoothings}\label{sec:smoothing}

Motivated by the mild smoothness dependence of \textsc{Deterministic-Goldstein-SG}
(Algorithm \ref{Alg:Full}) as proved in \thmref{Theorem:SMOOTH}, we turn to the design of smoothing procedures. These are algorithms that act on a Lipschitz function, and return a smooth approximation---allowing the use of smooth optimization methods. Smoothing nonsmooth functions in order to allow the use of smooth optimization algorithms is a longstanding approach for nonsmooth nonconvex optimization, both in practice and in theoretical analyses. We refer to Appendix \ref{sec:related_work} for references.
From a computational perspective, it is not clear what it means for an algorithm to ``receive'' a real function as an input. For this reason, we make the distinction between ``black-box'' smoothings which are granted oracle access to the original function, and ``white-box'' smoothings which are assumed to have access to additional structural information.

\subsection{Black-box smoothings}

Recently, \citet{Kornowski-2021-Oracle} have studied black-box smoothings from an oracle complexity viewpoint. One of their main results is that randomized smoothing \citep{Duchi-2012-Randomized} is an optimal smoothing procedure, in the sense that no efficient black-box smoothing procedure can yield an approximation whose smoothness parameter is lower than $O(\sqrt{d})$, which is achieved by randomized smoothing. In particular, this implies that any efficient black-box smoothing unavoidably suffers from some dimension dependence.
In that paper, the authors posed the open question of assessing what can be achieved by a deterministic black-box smoothing, since efficient randomized smoothing is only able to return stochastic estimates of the smoothed function. We solve this question for all ``reasonable'' smoothing procedures, as defined next. Without loss of generality, we consider functions whose Lipschitz constant is 1, since 
if the objective function is $L$-Lipschitz the algorithm can simply rescale it by $L$.

\begin{definition}
An algorithm $\Scal$ is called a black-box smoothing with complexity $T\in\NN$, if it uses a first-order oracle of a 1-Lipschitz function $f:\reals^d\to\reals$, such that given any $\x\in\reals^d$ it sequentially queries $f$'s oracle at $T$ points and returns $\widetilde{f}(\x),\,\g_\x=\nabla \widetilde{f}(\x)$ for some smooth $\widetilde{f}:\reals^d\to\reals$.
We say that the smoothing algorithm is \textbf{meaningful} if $\widetilde{f}$ is $\poly(d)$-smooth, and any $(\delta,\epsilon)$-Goldstein stationary point of $\widetilde{f}$ is a $(\poly(\delta,\epsilon),\poly(\delta,\epsilon))$-Goldstein stationary point of $f$.
\end{definition}

In other words, a smoothing fails to be meaningful if either the smooth approximation has super-polynomial smoothness (thus can hardly be treated as smooth), or introduces completely ``fake'' approximately-stationary points of $f$.\footnote{It is important to recall that for smooth $\widetilde{f}$ the notions of approximate-Clarke stationarity and approximate-Goldstein stationarity coincide \citep[Proposition 6]{Zhang-2020-Complexity}.}
The latter case implies that running a nonconvex optimization algorithm over $\widetilde{f}$ fails to provide any meaningful guarantee for the original function $f$. Note that these assumptions are extremely permissive, as we allow for any polynomial parameter blow-up, and do not even quantify the requirement regarding the accuracy of the approximation.
Notably, all black-box smoothings considered in the literature, including randomized smoothing and the Moreau-Yosida smoothing for weakly-convex functions \citep{davis2019stochastic}, are easily verified to be meaningful (and, indeed, satisfy more stringent conditions with respect to the original function). Further note that the randomized complexity of these procedures is dimension-free.

Under this mild assumption, our previous theorems readily imply a answer to the question posed by \citet{Kornowski-2021-Oracle}.

\begin{theorem} \label{thm:blackbox}
There is no deterministic, black-box meaningful smoothing algorithm with dimension-free complexity.
\end{theorem}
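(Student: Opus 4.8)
The plan is to combine \thmref{Theorem:DET} with \thmref{Theorem:SMOOTH} via a straightforward reduction: a meaningful black-box smoothing, if it had dimension-free complexity, would yield a deterministic, dimension-free algorithm for finding $(\delta,\epsilon)$-Goldstein stationary points of an arbitrary Lipschitz function, contradicting the $\Omega(d)$ lower bound.

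First I would set up the reduction. Suppose toward a contradiction that $\Scal$ is a deterministic, black-box meaningful smoothing with complexity $T = T(\delta,\epsilon)$ independent of the dimension $d$. Given any $1$-Lipschitz $f:\reals^d\to\reals$ and initial point $\x_0$ with $f(\x_0)-\inf_\x f(\x)\leq\Delta$, consider the function $\widetilde f$ defined implicitly by $\Scal$. By the meaningfulness assumption, $\widetilde f$ is $\poly(d)$-smooth, and moreover $\widetilde f$ is itself Lipschitz with a constant that is at most $\poly(d)$ (this follows since each queried gradient returned by the $1$-Lipschitz oracle has norm at most $1$, and $\widetilde f$'s gradient is some deterministic function of these; more carefully, one can note that since $\widetilde f$ is $\poly(d)$-smooth and is obtained from bounded-gradient information, $\|\nabla \widetilde f\|$ is controlled — and in any case the only property needed is that $\widetilde f$ has finite Lipschitz and smoothness constants, both polynomial in $d$). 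One also needs $\widetilde f(\x_0)-\inf_\x \widetilde f(\x)$ to be bounded by some $\widetilde\Delta = \poly(d,\Delta)$; this can be arranged by the standard observation that if $\widetilde f$ uniformly approximates $f$ up to an additive error — which meaningful smoothings in the literature satisfy, but since the definition here does not literally require uniform closeness, I would instead argue directly that a smoothing with unbounded sublevel gap cannot be meaningful, or simply restrict attention to the hard instances of \thmref{Theorem:DET}, which are bounded and for which any reasonable $\widetilde f$ inherits a bounded range.

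Next I would run \textsc{Deterministic-Goldstein-SG} (Algorithm \ref{Alg:Full}) on $\widetilde f$, which by \thmref{Theorem:SMOOTH} is a deterministic first-order algorithm that finds a $(\delta',\epsilon')$-Goldstein stationary point of $\widetilde f$ within $O\!\left(\widetilde\Delta \widetilde L^2 \log(\widetilde H \widetilde L \delta'/\epsilon')/(\delta'\epsilon'^3)\right)$ calls to $\widetilde f$'s oracle, where $\widetilde L, \widetilde H = \poly(d)$. Crucially, this bound depends on $d$ only \emph{logarithmically}, through $\log(\widetilde H\widetilde L)$. Each oracle call to $\widetilde f$ costs $T$ oracle calls to $f$, so the whole procedure uses $\poly\log(d)\cdot\poly(1/\delta',1/\epsilon')$ calls to $f$'s first-order oracle — this is a deterministic algorithm. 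By meaningfulness, the point it returns, being $(\delta',\epsilon')$-Goldstein stationary for $\widetilde f$, is a $(\poly(\delta',\epsilon'),\poly(\delta',\epsilon'))$-Goldstein stationary point of $f$. Choosing $\delta',\epsilon'$ to be fixed constants small enough that the resulting $\poly(\delta',\epsilon')$ quantities are below the thresholds $\Delta/L$ and $L/252$ in \thmref{Theorem:DET}, we obtain a deterministic first-order algorithm that finds a genuinely stationary point of \emph{every} $1$-Lipschitz function using only $\poly\log(d)$ oracle calls, contradicting \thmref{Theorem:DET} (which forbids any algorithm using $T\leq d-2$ calls) once $d$ is taken large enough.

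The main obstacle I anticipate is controlling the parameters of $\widetilde f$ — its Lipschitz constant, smoothness constant, and sublevel-set gap — purely from the permissive definition of ``meaningful,'' which only constrains smoothness ($\poly(d)$) and the stationary-point correspondence, not uniform approximation quality. The clean way around this is to apply the reduction to the \emph{specific} hard functions constructed in the proof of \thmref{Theorem:DET}: those are explicit, bounded, $L$-Lipschitz functions, and I would verify (or invoke from Section \ref{d_lower_proof}) that any meaningful smooth surrogate of such a function automatically has $\poly(d)$ Lipschitz and smoothness constants and bounded range — after which the logarithmic dimension dependence of \thmref{Theorem:SMOOTH} does all the work. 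One should also double-check the direction of the contradiction: \thmref{Theorem:DET} requires $d\geq 3$ and rules out $T\leq d-2$, so it suffices that for all large $d$ our constructed deterministic algorithm uses fewer than $d-2$ calls, which holds since its count is $\poly\log(d)$.
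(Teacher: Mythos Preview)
Your proposal is correct and takes essentially the same approach as the paper: compose the hypothetical smoothing $\Scal$ with \textsc{Deterministic-Goldstein-SG}, invoke the logarithmic smoothness dependence of \thmref{Theorem:SMOOTH} to get a deterministic algorithm with only $\poly\log(d)$ oracle complexity, and contradict \thmref{Theorem:DET}. The paper's version is terser and glosses over the Lipschitz-constant and sublevel-gap parameters of $\widetilde f$ that you flag as obstacles, but the core reduction is identical.
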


\begin{proof}
Assuming towards contradiction there is such a smoothing algorithm $\Scal$, we compose it with \textsc{Deterministic-Goldstein-SG}. Namely, given any Lipschitz $f$, we consider the first-order algorithm obtained by applying Algorithm \ref{Alg:Full} to $\widetilde{f}=\Scal(f)$. Since $\widetilde{f}$ is $\poly(d)$-smooth, and any $(\delta,\epsilon)$-Goldstein stationary point of $\widetilde{f}$ is a $(\poly(\delta,\epsilon),\poly(\delta,\epsilon))$-Goldstein stationary point of $f$, by \thmref{Theorem:SMOOTH} we obtain overall a deterministic algorithm that finds a $(\poly(\delta,\epsilon),\poly(\delta,\epsilon))$-Goldstein stationary point of $f$ within $O(\log(\poly(d))\cdot\poly(\delta^{-1},\epsilon^{-1}))=O(\log(d)\cdot\poly(\delta^{-1},\epsilon^{-1}))$ first-order oracle calls---contradicting \thmref{Theorem:DET}.
\end{proof}

\subsection{Deterministic smoothing of ReLU networks}

In this section we introduce a smoothing technique that can be applied to optimization of non-smooth functions, provided that they are expressed as ReLUs in a neural network accessible
to the smoothing procedure. The idea of utilizing the representation of a function, as opposed to just having oracle access to it, has been commonly used across diverse domains, from purely theoretical applications~\citealp[e.g., computational complexity theory,][]{Daskalakis-2011-Continuous, Fearnley-2021-Complexity} to practical applications~\citealp[e.g., deep neural networks,][]{Lecun-2015-Deep, Goodfellow-2016-Deep}. We refer to this function representation as the \textit{white-box model} to contrast it with the previously discussed black-box model. Our results demonstrate that having such a white box access is powerful enough to allow for meaningful deterministic smoothing, as opposed to the black-box model whose insufficiency is established in \thmref{thm:blackbox}.

We start by giving a brief overview of the key observation underlying our deterministic smoothing approach. Consider a single ReLU neuron with a bias term, namely for some point $\x\in\reals^{d}$, weight $\w\in\reals^{d}$ and bias $b\in\reals:$
\begin{equation} \label{eq:relu}
(\x,\w,b)\mapsto \relu(\w^{\top}\x+b)
:=\max\{\w^{\top}\x+b,0\}~.
\end{equation}
We replace the nonsmooth ReLU with a smooth, carefully chosen ``Huberized'' function:
\[
\softrelu_{\gamma}(z)
=\mathbb{E}_{\xi \sim U[- \gamma, \gamma]}[\relu(z + \xi)]
=\begin{cases}
z~, & z\geq \gamma
\\
\frac{(z+a)^2}{4a}~, & -\gamma\leq z< \gamma
\\
0~,& z< -\gamma,
\end{cases}
\]
for some small $\gamma>0$.
Accordingly, we obtain the ``smoothed'' neuron of the form
\begin{align*}
(\x,\w,b)\mapsto &~\softrelu_a(\w^{\top}\x+b)
\\
&=\mathbb{E}_{\xi \sim U[- \gamma, \gamma]}[\relu(\w^{\top}\x+(b + \xi))]
=\begin{cases}
\w^{\top}\x+b~, & \w^{\top}\x+b\geq \gamma
\\
\frac{(\w^{\top}\x+b+a)^2}{4a}~, & -\gamma\leq \w^{\top}\x+b< \gamma
\\
0~,& \w^{\top}\x+b< -\gamma
\end{cases}
~.
\end{align*}
Optimizing the function above (as a component of a larger neural network) with respect to $(\w,b)$ is the goal of any optimizer seeking to ``train'' the network's parameters to fit its input $\x$. We see that on one hand the smoothed neuron is a closed-form smooth approximation of the ReLU neuron in \eqref{eq:relu},
yet is mathematically equivalent to randomized smoothing \emph{over the bias term}. Hence, we obtain the meaningful guarantees of randomized smoothing, namely that optimizing the smoothed model corresponds to optimizing the original nonsmooth function, without the need for randomization. Moreover, as opposed to
plain randomized smoothing which would smooth with respect to $(\w,b)\in\reals^{d+1}$, thus suffering from a dimension dependence in the smoothness parameter, smoothing over $b$ alone avoids dependence on $d$. Overall, replacing all ReLU neurons of a network with smoothed neurons is mathematically equivalent to randomized smoothing over the parameter subspace corresponding to all bias terms, reducing the dimension-dependence to a dependence on the
number of biases, roughly the size of the network.\footnote{Note that due to dependencies between neurons at different layers, this does not correspond to standard randomized smoothing with respect to an isotropic distribution, but rather to a nontrivial distribution capturing the dependencies among different bias terms. We remark that this is a major technical challenge in proving \thmref{thm:circuitSmoothing} to follow.}

We now formally describe the class of representations that our smoothing procedure will apply to. It is easy to see that this class contains ReLU neural networks with biases of arbitrary depth and width, including many architectures used in practice.

\begin{definition}[Neural Arithmetic Circuits (NAC)]\label{def:AC}
We say that $\calC$ is a \emph{neural arithmetic circuit with biases} if it is represented as a directed acyclic graph with four different group of nodes: (i) input nodes; (ii) bias nodes; (iii) output nodes; and (iv) gate nodes. The gate node can be one of $\{+, \relu, \times, \mathrm{const(c)}\}$, where $\mathrm{const(c)}$ stands for a constant $c \in [-1, 1]$. Moreover, a valid NAC with biases satisfies the following conditions: 
\begin{enumerate}
\item There is at least one input node. Every input node has $0$ incoming edges and any number of outgoing edges.
\item The number of bias nodes is equal to the number of $\relu$ gates. Every bias node has $0$ incoming edges, and only one outgoing edge. %to a `` a unique $\relu$ gate.
\item The gate nodes in $\{+, \times\}$ have two incoming edges, and any number of outgoing edges.\footnote{We can generalize it to the case of any finite number of inputs. Focusing on two incoming edges does not lack the generality since we can always compose these gates to simulate addition and maximum with many inputs by just increasing the size and the depth of the circuit by a logarithmic factor.}
\item The gate node $\mathrm{const(c)}$ has 0 incoming edges, and any number of outgoing edges. 
\item The gate node $\relu$ has 1 incoming edge
but any number of outgoing edges. We also assume that all the $\relu$ gates have biases, i.e., the predecessor vertex of a $\relu$ gate is always a ``$+$'' gate connected to a bias node that is unique for every $\relu$ gate.
\item There is only one output node that has 1 incoming edge and 0 outgoing edges.
\end{enumerate}
We denote by $s(\calC)$ the \textit{size} of $\calC$ (i.e., the number of nodes in the graph of $\calC$).
\end{definition}

The interpretation of $\calC$ as a function $f : \br^d \to \br$ is very intuitive.
The input nodes correspond to the input variables $x_1, \ldots, x_d$, followed by a gate node defining arithmetic operations over their input, finally producing $f(\x)$ in the output node.

\begin{example} \label{eg:neuralCircuits1}
  Consider training a neural network $\Phi_{\W, \bb}$ to fit a labeled dataset $(\x_i, y_i)_{i=1}^{n}$ with respect to the quadratic loss, where $\W,\bb$ are the vectors of weights and biases of $\Phi$, respectively. This task corresponds to minimizing the following function:
  \[ f(\W, \bb) = \sum_{i=1}^{n} \left(\Phi_{\W, \bb}(\x_i) - y_i\right)^2. \]
  It is easy to see that this function can be expressed as a neural arithmetic circuit according to Definition \ref{def:AC}. The only requirement for $\Phi_{\W, \bb}$ is that every $\relu$ gate has a unique bias variable. Examples for such $\Phi_{\W, \bb}$ include feed-forward ReLU networks, convolutional networks, and residual neural network with skip connections.
\end{example}

Following the example above, we see that the problem of finding a $(\delta, \epsilon)$-Goldstein stationary point of a function represented by a neural arithmetic circuit $\calC$
captures a wide range of important nonsmooth and nonconvex problems.
To prove the efficiency of our proposed method, we need to impose the following assumption, measuring the extent to which function values increase throughout the neural arithmetic circuit.
We note in Remark \ref{rem:designNNs} that this assumption is satisfied by the practical design of deep neural networks.

\begin{assumption} \label{asp:recursiveLipschitz}
For $G>0$, we say that $h:\reals^d\to\reals$ is $G$-bounded over $\calR$ if $|h|_\calR|\leq G$.
Suppose $f : \br^d \to \br$  is represented as a linear arithmetic circuit $\calC$. Let $v_1, \dots, v_n$ be the nodes in $\calC$ and $f_i$ be the function that will be computed if $v_i$ would the output of the neural circuit. We assume that there is a set $\calR \subseteq \br^d$, such that for all $i\in[n]:f_i$ is $L_i$-Lipschitz and $G_i$-bounded over $\calR$,
according to the following composition rules:
\begin{enumerate}
\item[-] \textbf{$\boldsymbol{v_i}$ is a $\boldsymbol{+}$ gate:} if $f_i = f_j + f_k$ then $L_i = L_j + L_k$ and $G_i = G_j + G_k$. 
\item[-] \textbf{$\boldsymbol{v_i}$ is a $\boldsymbol{\relu}$ gate:} if $f_i = \relu\{f_j\}$ then $L_i = L_j$ and $G_i = G_j$. 
\item[-] \textbf{$\boldsymbol{v_i}$ is a $\boldsymbol{\mathrm{const}(c)}$ gate:} $L_i = 0$ and $G_i = c$. 
\item[-] \textbf{$\boldsymbol{v_i}$ is a $\boldsymbol{\times}$ gate:} if $f_i = f_j \cdot f_k$ then $L_i = L_j \cdot G_k + G_j \cdot L_k$ and $G_i = G_j \cdot G_k$. 
\item[-] \textbf{$\boldsymbol{v_i}$ is a input or a bias node:} $L_i = 1$, $G_i = \mathrm{diam}(\calR)$ (the diameter of $\calR$).
\end{enumerate}
In particular, we assume that $f$ is $L$-Lipschitz and $G$ bounded over $\calR$ according to the rules above. In this case, we say that $f$ is \textit{$L$-recursively Lipschitz} and \textit{$G$-recursively bounded} in $\calR$.
\end{assumption}
\begin{remark}\label{rem:designNNs}
Note that the recursive rules used in Assumption \ref{asp:recursiveLipschitz} always provide an upper bound on $L > 0$, however this bound can be much larger than the true Lipschitz constant $L$ in the worst-case. To bypass these bad cases, we impose Assumption \ref{asp:recursiveLipschitz}.
Notably, this assumption is not theoretically artificial but is satisfied by generic constructions of neural networks in the context of deep learning. Indeed, since the $\boldsymbol{+}$ and $\boldsymbol{\times}$ gates are often used consecutively, leading to a bad Lipschitz constant in the worst case,
practitioners often force these upper bounds to be as small as possible by employing normalization techniques in order to stabilize the training
~\citep{Ioffe-2015-Batch, Miyato-2018-Spectral}. 
\end{remark}

As previously discussed, our deterministic smoothing idea is to replace the $\relu$ activation function with its carefully chosen smooth alternative $\softrelu$. We emphasize that this smoothing procedure is simple, implementable and inspired by techniques that are widely accepted in practice (e.g \citealp{tatro2020optimizing,shamir2020smooth}). While proving that this results in a smooth approximation of the original function is relatively straightforward, the main novelty of our proof is showing that any $(\delta,\epsilon)$-Goldstein stationary point of the smoothed model is a $(\delta,\epsilon)$-Goldstein stationary point of the original, following from our observation of the equivalence to randomized smoothing with respect to a low dimensional subspace.
This is crucial, as it allows optimization of the original function to be carried through the smoothed model. We are now ready to state our main theorem in this section, whose proof is deferred to Section \ref{app:circuitSmoothing}.

\begin{theorem}\label{thm:circuitSmoothing}
Let $f : \br^d \to \mathbb{R}$ be a $L$-recursively Lipschitz and $G$-recursively bounded function in $\calR \subseteq \br^d$ (see Assumption \ref{asp:recursiveLipschitz}), represented by a neural arithmetic circuit $\calC$. For every $\gamma > 0$, we can construct a function $\widetilde{f} : \br^d \to \br$ such that for all $\x \in \calR$ it holds that:
\begin{enumerate}
\item $|f(\x) - \widetilde{f}(\x)| \le \gamma$. 
\item $\widetilde{f}$ is $L$-Lipschitz and $G$-bounded. 
\item $\widetilde{f}$ is $\frac{(G \cdot L)^{O(s(\calC))}}{\min\{\epsilon, \delta, \gamma\}}$-smooth.
\item Every $(\delta, \epsilon)$-Goldstein stationary point of $\widetilde{f}$ is a $(\delta', \epsilon')$-Goldstein stationary point of $f$ with $\epsilon' = 2 \epsilon$ and $\delta' = 2 \delta$.
\end{enumerate}
\end{theorem}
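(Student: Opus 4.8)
The plan is to build $\widetilde{f}$ by replacing every $\relu$ gate in $\calC$ with $\softrelu_\gamma$, where the smoothing width $\gamma$ at each neuron is chosen depending on the position of that neuron in the circuit (so that the accumulated error is controlled), and to track four quantities inductively along a topological ordering of $\calC$: for each node $v_i$ with induced function $f_i$ and smoothed version $\widetilde{f}_i$, we maintain (a) a uniform approximation bound $|f_i - \widetilde f_i| \le \gamma_i$ on $\calR$, (b) the Lipschitz bound $L_i$ and boundedness bound $G_i$ of $\widetilde f_i$ on $\calR$ (which I claim are inherited from the recursive rules of Assumption~\ref{asp:recursiveLipschitz}, since $\softrelu_\gamma$ is $1$-Lipschitz, monotone, and satisfies $0 \le \softrelu_\gamma(z) \le \relu(z)$), and (c) a smoothness (gradient-Lipschitz) bound $H_i$. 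For the first three items of the theorem, items~1 and~2 follow from the inductive bookkeeping: the $+$ and $\times$ gate rules for error propagation give $\gamma_i \le (\text{const})\cdot (G_j \gamma_k + G_k \gamma_j)$ etc., so choosing the per-neuron widths as a geometrically decreasing schedule keeps the final error $\le \gamma$; item~2 is immediate since the recursive Lipschitz/bounded rules are exactly preserved. Item~3 is the smoothness estimate: $\softrelu_\gamma$ has second derivative bounded by $1/(2\gamma)$, and each gate ($+$, $\times$, $\softrelu$) composes smoothness multiplicatively against the Lipschitz/boundedness constants, so an easy induction along the DAG gives $H \le (GL)^{O(s(\calC))}/\gamma_{\min}$; setting $\gamma_{\min}=\Theta(\min\{\epsilon,\delta,\gamma\})$ closes it (the reason we also need $\gamma_{\min}$ small relative to $\epsilon,\delta$ comes from item~4).

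The substance of the theorem is item~4, the preservation of Goldstein stationary points, and this is where the "equivalence to randomized smoothing over the bias subspace" is made precise. The key structural claim is: if we let $\bb \in \reals^m$ denote the vector of all $m$ bias nodes of $\calC$ and write $F(\x,\bb)$ for the function the circuit computes with the biases treated as free inputs (so $f(\x) = F(\x,\bb^\star)$ for the nominal bias values $\bb^\star$), then
\[
\widetilde f(\x) \;=\; \mathbb{E}_{\boldsymbol\xi}\bigl[F(\x,\bb^\star + \boldsymbol\xi)\bigr],
\]
where $\boldsymbol\xi$ is the random perturbation obtained by adding, to the $j$-th $\relu$'s bias, an independent $U[-\gamma_j,\gamma_j]$ variable — this is exactly the definition of $\softrelu_{\gamma_j}$ as $\mathbb{E}_{\xi\sim U[-\gamma_j,\gamma_j]}[\relu(\,\cdot\,+\xi)]$ pushed through the circuit and using linearity of expectation over the independent noises (the gates downstream are applied pointwise, and each $\relu$ only sees its own bias noise). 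Consequently $\nabla_\x \widetilde f(\x) = \mathbb{E}_{\boldsymbol\xi}[\nabla_\x F(\x,\bb^\star+\boldsymbol\xi)]$, and for each fixed $\boldsymbol\xi$ the vector $\nabla_\x F(\x,\bb^\star+\boldsymbol\xi)$ is a gradient (a.e.) of the honest-to-goodness $\relu$-network $\x \mapsto F(\x,\bb^\star+\boldsymbol\xi)$, which for small enough $\|\boldsymbol\xi\|$ agrees with $f$ up to a perturbation living within the same partition cells near $\x$ — so each such gradient lies in $\partial_{\delta}f(\x')$ for a nearby $\x'$. More carefully: one shows $\nabla_\x F(\x,\bb^\star+\boldsymbol\xi) \in \partial f(\x)$ or at worst in $\partial_{\delta}f(\x)$ provided $\gamma_{\min}$ is small relative to $\delta$, because a tiny bias perturbation of size $\le \gamma_{\min}$ changes each ReLU's activation boundary by a distance $\le \gamma_{\min}/\|\w\|$, hence the perturbed network's gradient at $\x$ equals the original network's gradient at some point within distance $O(\gamma_{\min}\cdot\mathrm{poly}(\text{circuit}))\le \delta$ of $\x$. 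Therefore $\partial_\delta \widetilde f(\x) \subseteq \partial_{2\delta} f(\x)$ (swallowing the extra $\delta$-slack), and since the norms are preserved exactly we even get $\epsilon' = 2\epsilon$ once we absorb the approximation slack; a $(\delta,\epsilon)$-stationary point of $\widetilde f$ is thus $(2\delta,2\epsilon)$-stationary for $f$.

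The main obstacle is precisely the footnote's warning: the noises injected at different $\relu$ gates are \emph{independent}, but the function $F(\x,\bb^\star+\boldsymbol\xi)$ depends on them in a highly coupled, non-isotropic way because a bias deep in the network is multiplied/added through many downstream gates before reaching the output, so $\widetilde f$ is \emph{not} the randomized smoothing of $f$ with respect to any ball in $\x$-space — it is a smoothing in the lifted $(\x,\bb)$-space along the bias coordinates only, and one must argue that smoothing in those auxiliary coordinates still certifies Goldstein stationarity \emph{in the $\x$ coordinates}. The clean way to handle this is the "gradient $=$ gradient of a nearby true network" argument above, which converts each fixed-$\boldsymbol\xi$ slice back into a statement purely about $f$; the careful part is bounding, uniformly over $\boldsymbol\xi$ in the (product) support, the spatial displacement between where $\nabla_\x F(\x,\bb^\star+\boldsymbol\xi)$ is realized as a Clarke subgradient of $f$ and the point $\x$ itself, which requires the recursive Lipschitz/boundedness control of Assumption~\ref{asp:recursiveLipschitz} to keep the relevant $\|\w\|$'s and gain factors from blowing up — this is exactly why the assumption is needed and why it enters the proof of item~4 rather than only items~1--3. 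Handling non-differentiability of $F$ in $\x$ at measure-zero sets (so that "$\nabla_\x F$" and the interchange of $\nabla_\x$ and $\mathbb{E}_{\boldsymbol\xi}$ are legitimate) is a routine Rademacher/dominated-convergence argument that I would relegate to a lemma.
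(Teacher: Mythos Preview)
Your treatment of items 1--3 is essentially the paper's: induct along a topological ordering, track approximation error, Lipschitz/boundedness, and smoothness through the gate rules, and set the smoothing width small enough. (The paper happens to use a single width $a$ rather than a per-neuron schedule, but this is cosmetic.)

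The gap is in item~4. Your ``key structural claim''
\[
\widetilde f(\x) \;=\; \mathbb{E}_{\boldsymbol\xi}\bigl[F(\x,\bb^\star + \boldsymbol\xi)\bigr],\qquad \boldsymbol\xi \text{ i.i.d.\ uniform on } [-\gamma_j,\gamma_j],
\]
is \emph{false} as soon as one $\relu$ feeds into another. Take $f(\x)=\relu(\relu(x_1+x_{b_1})+x_{b_2})$. Replacing each $\relu$ by $\softrelu_a$ gives
\[
g(\x)=\mathbb{E}_{\xi_2}\Bigl[\relu\bigl(\mathbb{E}_{\xi_1}[\relu(x_1+x_{b_1}+\xi_1)]+x_{b_2}+\xi_2\bigr)\Bigr],
\]
whereas your identity would give $\mathbb{E}_{\xi_1,\xi_2}[\relu(\relu(x_1+x_{b_1}+\xi_1)+x_{b_2}+\xi_2)]$. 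These differ because the outer $\relu$ is nonlinear, so you cannot push $\mathbb{E}_{\xi_1}$ through it. Your justification (``gates downstream are applied pointwise, and each $\relu$ only sees its own bias noise'') is exactly what fails: the downstream $\relu$ sees the \emph{already averaged} inner $\softrelu$, not a sample of the inner $\relu$. This is precisely the ``nontrivial distribution capturing the dependencies among different bias terms'' that the footnote flags, and your proposal does not resolve it --- it asserts independence where there is none.

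What the paper actually does for item~4 is prove an approximate representation $\nabla g_i(\x)=\mathbb{E}_{\y\sim\mathcal{D}_i^{\x}}[\nabla f_i(\y)]+\s_i$ by induction over nodes, where the distribution $\mathcal{D}_i^{\x}$ over nearby bias-perturbations is \emph{constructed gate by gate and depends on $\x$}: at a $\relu$ node with derivative $\zeta_i(\x)=\softrelu_a'(g_j(\x)+x_{b_i})$, one samples $\y\sim\mathcal{D}_j^{\x}$ and then chooses the new bias coordinate $y'_{b_i}$ \emph{conditionally on $\y$} so that $\Pr(f_j(\y)+y'_{b_i}\ge 0)=\zeta_i(\x)$ exactly. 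This is not independent uniform noise. Moreover, the $\times$ gates force a nonzero additive slack $\s_i$ (coming from the mismatch between $g_j(\x)$ and $f_j(\y)$ in the product rule), which must be tracked and bounded; your outline has no mechanism for this term. Both the $\x$-dependent coupling of the bias perturbations and the additive $\s_i$ are essential, and are the content of the paper's Lemma~\ref{lem:stationaryEquivalence}.
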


At first glance, the smoothness parameter provided by the theorem above, though dimension-independent, may seem overwhelming as it depends exponentially on the size of the network. Luckily, this brings us back to \thmref{Theorem:SMOOTH} where we have proved that it is possible to incur merely a logarithmic dependence on this parameter, resulting in the following corollary by setting $\gamma = \min\{\epsilon, \delta\}$.

\begin{corollary} \label{cor:circuitSmoothing}
  Let $f : \br^d \to \mathbb{R}$ be a $L$-recursively Lipschitz and $G$-recursively bounded function in $\calR \subseteq \br^d$ (see Assumption \ref{asp:recursiveLipschitz}), represented by a neural arithmetic circuit $\calC$. Then if we apply \textsc{Deterministic-Goldstein-SG} (Algorithm \ref{Alg:Full}) to the function $\widetilde{f}$ defined in Theorem \ref{thm:circuitSmoothing} and $\mathcal{R}$ is such that the algorithm's iterates do not escape $\mathcal{R}$, the algorithm is guaranteed to return a $(\delta,\epsilon)$-Goldstein stationary point of $f$ using $O\left(\frac{G L^2 s(\calC) \log(GL\delta/\epsilon)}{\delta \epsilon^3}\right)$ first-order oracle calls.
\end{corollary}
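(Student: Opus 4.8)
The plan is to compose \thmref{thm:circuitSmoothing} with \thmref{Theorem:SMOOTH}: smooth the circuit once, run the slightly-smooth deterministic algorithm on the resulting surrogate, and transfer the stationarity guarantee back to $f$. First I would instantiate \thmref{thm:circuitSmoothing} with $\gamma = \min\{\delta,\epsilon\}$ and with the accuracy parameters halved, obtaining a function $\widetilde f:\reals^d\to\reals$ that is explicitly computable from $\calC$ (a smoothed copy of $\calC$ in which each ReLU gate is replaced by $\softrelu_\gamma$), is $L$-Lipschitz and $G$-bounded on $\calR$, is $H$-smooth on $\calR$ with $H = \frac{(GL)^{O(s(\calC))}}{\min\{\delta,\epsilon\}}$, and satisfies that every $(\delta/2,\epsilon/2)$-Goldstein stationary point of $\widetilde f$ is a $(\delta,\epsilon)$-Goldstein stationary point of $f$. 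Because $\widetilde f$ is available in closed form rather than through an oracle, evaluating $\widetilde f$ and $\nabla\widetilde f$ requires no queries to $f$, so the ``first-order oracle calls'' in the statement are exactly the evaluations that Algorithm~\ref{Alg:Full} makes on $\widetilde f$.

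Next I would run \textsc{Deterministic-Goldstein-SG}$(\x_0,\delta/2,\epsilon/2)$ on $\widetilde f$ and appeal to \thmref{Theorem:SMOOTH} with smoothness parameter $H$. The one wrinkle is the sub-optimality term: \thmref{Theorem:SMOOTH} is stated in terms of $\widetilde f(\x_0)-\inf_{\reals^d}\widetilde f$, whereas $\widetilde f$ is only controlled on $\calR$. But by hypothesis every iterate $\x_t$ stays in $\calR$, and hence so does every point $\x_t-t\g(\x_t)/\|\g(\x_t)\|$ with $t\in[0,\delta]$ probed by \textsc{Binary-Search} (enlarging $\calR$ to include these $\delta$-neighborhoods if needed); on $\calR$ we have $|\widetilde f|\le G$ and $\widetilde f$ is $H$-smooth along every such segment. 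Consequently the descent argument underlying \thmref{Theorem:SMOOTH} goes through verbatim with the \emph{effective} sub-optimality $\Delta = 2G$: each completed outer iteration decreases $\widetilde f$ by $\Omega(\delta\epsilon)$ (so there are $O(G/(\delta\epsilon))$ of them), each outer iteration runs $O(L^2/\epsilon^2)$ inner steps, and each inner step calls \textsc{Binary-Search}, which by $H$-smoothness along the search ray halts in $O(\log(H\delta/\epsilon))$ steps. Multiplying, the algorithm terminates with a $(\delta/2,\epsilon/2)$-Goldstein stationary point of $\widetilde f$ --- hence, by the previous paragraph, a $(\delta,\epsilon)$-Goldstein stationary point of $f$ --- within $T = O\!\left(\frac{GL^2\log(HL\delta/\epsilon)}{\delta\epsilon^3}\right)$ oracle calls.

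Finally I would simplify the logarithm: since $\log H = O\!\left(s(\calC)\log(GL) + \log\tfrac{1}{\min\{\delta,\epsilon\}}\right)$, we get $\log(HL\delta/\epsilon) = O(s(\calC)\log(GL\delta/\epsilon))$ after folding the lower-order terms $\log L,\log\tfrac1\delta,\log\tfrac1\epsilon$ into the factor $s(\calC)\ge 1$, yielding the claimed bound $O\!\left(\frac{GL^2 s(\calC)\log(GL\delta/\epsilon)}{\delta\epsilon^3}\right)$. I expect the only non-mechanical point to be the second paragraph: checking that the hypothesis ``the iterates do not escape $\calR$'' is exactly what is needed to replace the global infimum in \thmref{Theorem:SMOOTH} by the local bound $2G$, since that theorem's complexity proof a priori references $\inf_{\reals^d}\widetilde f$; everything else is routine substitution.
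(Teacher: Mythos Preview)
Your proposal is correct and follows the same approach as the paper, which gives essentially no proof beyond the one-line remark ``resulting in the following corollary by setting $\gamma = \min\{\epsilon, \delta\}$.'' You supply strictly more detail than the paper does---in particular the halving of the accuracy parameters to match item~4 of \thmref{thm:circuitSmoothing}, the use of $G$-boundedness on $\calR$ to control the $\Delta$ term in \thmref{Theorem:SMOOTH}, and the logarithm simplification---none of which the paper spells out.
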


\section{Proofs} \label{sec:proofs}

\subsection{Proof of \thmref{Theorem:DET}} \label{d_lower_proof}

Fix $d\geq 3,~\Delta,L>0$ and let $T\leq d-2$.
Consider the case that for any $t\in[T-1]$ the first-order oracle response is $f(\x_t)=0,\nabla f(\x_t)=\e_1$. Since the algorithm is deterministic this fixes the iterate sequence $\x_1,\dots,\x_T$.
We will show this resisting strategy is indeed consistent with a function which satisfies the conditions in the theorem.

To that end, we denote $r:=\min_{1\leq i\neq j\leq T}\norm{\x_i-\x_j}/4>0$ (without loss of generality) and fix some $\sv\in(\mathrm{span}\{\e_1,\x_1,\dots,\x_T\})^\perp$ with $\norm{\sv}=1$ (which exists since $d\geq T+2$). For any $\z\in\reals^d$ we define
\[
g_{\z}(\x):=\min\{\norm{\x-\z}^2/r^2,1\}\sv^\top \x+(1-\min\{\norm{\x-\z}^2/r^2,1\})\e_1^\top(\x-\z)~,
\]
and further define
\[
h(\x):=\begin{cases}
\sv^\top \x\,,&\forall t\in[T]:\norm{\x-\x_t}\geq r
\\
g_{\x_t}(\x)\,,&\exists t\in[T]:\norm{\x-\x_t}< r
\end{cases}
~.
\]
Note that $h$ is well defined since by definition of $r$ there cannot be $i\neq j$ such that $\norm{\x-\x_i}<r$ and $\norm{\x-\x_j}<r$.

\begin{lemma}
$h:\reals^d\to\reals$ as defined above is $7$-Lipschitz, satisfies for any $t\in[T]:\,h(\x_t)=0,\nabla h(\x_t)=\e_1$ and has no $(\delta,\frac{1}{36})$-stationary points for any $\delta>0$.
\end{lemma}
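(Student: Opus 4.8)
The plan is to verify the three claimed properties of $h$ in turn, using the fact that $h$ is patched together from the two linear functions $\x\mapsto\sv^\top\x$ and $\x\mapsto\e_1^\top(\x-\z)$ via the scalar interpolation coefficient $\phi_\z(\x):=\min\{\|\x-\z\|^2/r^2,1\}$, which equals $1$ on the sphere $\|\x-\z\|=r$ and has zero gradient there. First I would check \emph{consistency at the queried points}: for $\x$ near $\x_t$ we have $h=g_{\x_t}$, and at $\x=\x_t$ the coefficient $\phi_{\x_t}(\x_t)=0$, so $h(\x_t)=\e_1^\top(\x_t-\x_t)=0$; differentiating $g_{\x_t}$ and using that both $\phi_{\x_t}$ and $\sv^\top\x-\e_1^\top(\x-\x_t)$ vanish to first order at $\x_t$ (the former because it is quadratic with a minimum there, contributing a zero gradient; the latter is multiplied by $\nabla\phi_{\x_t}(\x_t)=0$) gives $\nabla h(\x_t)=\e_1$. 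This matches the resisting oracle.

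Next I would check \emph{global well-definedness and Lipschitzness}. Well-definedness follows from the choice of $r$ (no point lies within distance $r$ of two distinct $\x_i$'s) together with the fact that on the sphere $\|\x-\x_t\|=r$ we have $\phi_{\x_t}=1$, so $g_{\x_t}(\x)=\sv^\top\x$ agrees with the ``far'' piece, hence $h$ is continuous. For the Lipschitz bound it suffices to bound $\|\nabla h\|$ where $h$ is differentiable. On the far region $\nabla h=\sv$, with $\|\sv\|=1$. On a ball $B_r(\x_t)$, writing $\psi(\x):=\sv^\top\x-\e_1^\top(\x-\x_t)$ so that $g_{\x_t}=\e_1^\top(\x-\x_t)+\phi_{\x_t}\,\psi$, we get $\nabla g_{\x_t}=\e_1+\psi\,\nabla\phi_{\x_t}+\phi_{\x_t}\,(\sv-\e_1)$; on $B_r(\x_t)$ one has $|\psi(\x)|\le\|\sv-\e_1\|\cdot r\le 2r$ (since $\psi(\x_t)=0$ and $\psi$ has gradient $\sv-\e_1$), $\|\nabla\phi_{\x_t}\|=\|2(\x-\x_t)/r^2\|\le 2/r$, $\phi_{\x_t}\le 1$, and $\|\sv-\e_1\|\le 2$, so $\|\nabla g_{\x_t}\|\le 1+(2r)(2/r)+2=7$. (This is where the constant $7$ comes from; I may need to double-check that $\sv\perp\e_1$ so that $\|\sv-\e_1\|=\sqrt2$, which actually improves the bound, but $7$ is a safe round number.)

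The main obstacle is the \emph{stationarity claim}: showing $h$ has no $(\delta,\tfrac1{36})$-Goldstein stationary point for any $\delta>0$. The idea is that $\partial_\delta h(\x)$ is a convex combination of gradients of $h$ at points in $B_\delta(\x)$, and every such gradient has a large inner product with the fixed unit vector $\sv$. Indeed, on the far region $\nabla h=\sv$ so $\langle\nabla h,\sv\rangle=1$; on a ball $B_r(\x_t)$, since $\sv\perp\e_1$ and $\sv\perp\x_t$ we compute $\langle\nabla g_{\x_t}(\x),\sv\rangle=\langle\e_1,\sv\rangle+\psi(\x)\langle\nabla\phi_{\x_t}(\x),\sv\rangle+\phi_{\x_t}(\x)\langle\sv-\e_1,\sv\rangle=0+\psi(\x)\cdot\tfrac{2}{r^2}\langle\x-\x_t,\sv\rangle+\phi_{\x_t}(\x)$. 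Using $\sv\perp\x_t$, $\langle\x-\x_t,\sv\rangle=\langle\x,\sv\rangle$, which on $B_r(\x_t)$ has absolute value at most $\|\x\|\le\ldots$; more carefully, $|\langle\x-\x_t,\sv\rangle|\le\|\x-\x_t\|\le r$ and $|\psi(\x)|\le 2r$ on $B_r(\x_t)$, giving $|\psi(\x)\cdot\tfrac{2}{r^2}\langle\x-\x_t,\sv\rangle|\le 2r\cdot\tfrac{2}{r^2}\cdot r=4$. Hmm, that bound is too weak; I would instead use the sharper estimates $|\psi(\x)|\le\sqrt2\,\|\x-\x_t\|$ and $|\langle\x-\x_t,\sv\rangle|\le\|\x-\x_t\|$, so the cross term is at most $\tfrac{2}{r^2}\cdot\sqrt2\,\|\x-\x_t\|^2=2\sqrt2\,\phi_{\x_t}(\x)$ in magnitude, whence $\langle\nabla g_{\x_t}(\x),\sv\rangle\ge\phi_{\x_t}(\x)(1-2\sqrt2)$, which is negative — also not immediately useful. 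The correct route is to observe that the ``bad'' balls $B_r(\x_t)$ occupy a vanishingly small fraction of any $\delta$-ball only when $\delta$ is large, so that argument fails for small $\delta$; instead one must argue directly that, because $\sv$ is orthogonal to all $\x_t$ and to $\e_1$, the component of every $\nabla h$ in the $\sv$ direction stays bounded away from being cancellable. I would finish by showing $\|\g\|\ge|\langle\g,\sv\rangle|$ for any $\g\in\partial_\delta h(\x)$ is at least some absolute constant $c$; carefully optimizing the constants in the estimates above (rescaling so $r$ drops out, since the interpolation region scales with $r$) yields $c>\tfrac1{36}$, completing the proof. Pinning down this constant — and in particular handling the interplay on the transition annuli $\tfrac{}{}\{\|\x-\x_t\|<r\}$ where $\nabla h$ genuinely rotates away from $\sv$ — is the delicate part.
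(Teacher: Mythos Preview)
Your treatment of the first two claims (consistency with the resisting oracle, and the $7$-Lipschitz bound) is essentially the same as the paper's: you write $g_{\x_t}=\e_1^\top(\x-\x_t)+\phi_{\x_t}\psi$ and bound the three terms in the gradient, which is exactly what the paper does after expanding $\nabla g_{\x_t}$ explicitly. No issues there.

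The stationarity argument, however, has a genuine gap, and you have in fact already stumbled on it. Your plan is to lower-bound $|\langle\g,\sv\rangle|$ uniformly over all subgradients $\g$ of $h$. This cannot work: at $\x=\x_t$ we have $\nabla h(\x_t)=\e_1$, and since $\sv\perp\e_1$ by construction, $\langle\nabla h(\x_t),\sv\rangle=0$. So the infimum of $|\langle\g,\sv\rangle|$ over gradients of $h$ is zero, and no amount of ``carefully optimizing the constants'' will rescue a one-direction projection argument. Your own calculation $\langle\nabla g_{\x_t},\sv\rangle\ge\phi_{\x_t}(1-2\sqrt2)$ is already a symptom of this: near $\x_t$ the gradient genuinely lives in the $\e_1$ direction, orthogonal to $\sv$.

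The paper's route is different. Rather than projecting onto one vector, it parametrizes the possible subgradients and then, for an arbitrary element $\su$ of the relevant convex set, computes \emph{both} $\su^\top\sv$ and $\su^\top(\e_1+\sv)$. The first satisfies $\su^\top\sv\ge\lambda_2(\sv^\top\x-\e_1^\top\x)^2$ (a nonnegative bound that can be zero), while the second satisfies $\su^\top(\e_1+\sv)\ge 1-4\sqrt{\lambda_2\,\su^\top\sv}$. The point is that these two cannot simultaneously be small: either $\su$ has a sizable $\sv$-component, or its $(\e_1+\sv)$-component is near $1$. Combining them via Cauchy--Schwarz yields $1\le\sqrt2\|\su\|+4\sqrt{\|\su\|}$, hence $\|\su\|>1/36$. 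The missing idea in your proposal is precisely this two-direction trade-off; a fix would be to track the $\e_1$-component (or equivalently the $(\e_1+\sv)$-component) alongside the $\sv$-component and show they cannot both vanish.
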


\begin{proof}
We start by noting that $h$ is continuous, since for any $\z$ and $(\y_n)_{n=1}^{\infty}\subset B_r(\z),\,\y_n\overset{n\to\infty}{\longrightarrow}\y$ such that $\norm{\y-\z}=r$ we have
\begin{align*}
\lim_{n\to\infty}h(\y_n)
&=\lim_{n\to\infty}g_{\z}(\y_n)
\\&=\lim_{n\to\infty}\left(\min\{\norm{\y_n-\z}^2/r^2,1\}\sv^\top \y_n+(1-\min\{\norm{\y_n-\z}^2/r^2,1\})\e_1^\top(\y_n-\z)\right)
\\&=\lim_{n\to\infty}\left(\frac{\norm{\y_n-\z}^2}{r^2}\cdot \sv^\top \y_n+\left(1-\frac{\norm{\y_n-\z}^2}{r^2}\right)\e_1^\top(\y_n-\z)\right)
\\&=\sv^\top \y
~.
\end{align*}
Having established continuity, since $\x\mapsto \sv^{\top}\x$ is clearly $1$-Lipschitz (in particular $7$-Lipschitz), in order to prove Lipschitzness of $h$ it is enough to show that $g_{\x_t}(\x)$ is $7$-Lipschitz in $\norm{\x-\x_t}<r$ for any $\x_t$. For any such $\x,\x_t$ we have
\begin{align}
\nabla g_{\x_t}(\x)
&~=~\frac{2\sv^\top \x}{r^2}(\x-\x_t)+\frac{\norm{\x-\x_t}^2}{r^2}\sv
-\frac{2\e_1^\top(\x-\x_t)}{r^2}(\x-\x_t)
-\frac{\norm{\x-\x_t}^2}{r^2}\e_1
+\e_1 \nonumber
\\
&\overset{\sv\perp \x_t}{=}~
\frac{2\sv^\top (\x-\x_t)}{r^2}(\x-\x_t)+\frac{\norm{\x-\x_t}^2}{r^2}\sv
-\frac{2\e_1^\top(\x-\x_t)}{r^2}(\x-\x_t)
-\frac{\norm{\x-\x_t}^2}{r^2}\e_1
+\e_1
~,
\label{eq: nabla g}
\end{align}
hence
\begin{align*}
\norm{\nabla g_{\x_t}(\x)}
&=\left\|\frac{2\sv^\top (\x-\x_t)}{r^2}(\x-\x_t)+\frac{\norm{\x-\x_t}^2}{r^2}\sv-\frac{2\e_1^\top(\x-\x_t)}{r^2}(\x-\x_t)
-\frac{\norm{\x-\x_t}^2}{r^2}\e_1+\e_1\right\|
\\
&\leq\frac{2\norm{\sv}\cdot \norm{\x-\x_t}^2}{r^2}+\frac{\norm{\x-\x_t}^2}{r^2}\norm{\sv}
+\frac{2\norm{\e_1}\cdot\norm{\x-\x_t}^2}{r^2}
+\frac{\norm{\x-\x_t}^2}{r^2}\norm{\e_1}
+\norm{\e_1}
\\
&\leq2+1+2+1+1=7~,
\end{align*}
which proves the desired Lipschitz bound.
The fact that for any $t\in[T]:h(\x_t)=0,\,\nabla h(\x_t)=\e_1$ is easily verified by construction and by \eqref{eq: nabla g}.
In order to finish the proof, we need to show that $h$ has no $(\delta,\frac{1}{36})$ stationary-points. 
By construction we have
\[
\partial h(\x)=
\begin{cases}
\sv\,,&\forall t\in[T]:\norm{\x-\x_t}>r
\\
\nabla g_{\x_t}(\x)\,,&\exists t\in[T]:\norm{\x-\x_t}< r
\end{cases}
~,
\]
while for $\norm{\x-\x_t}=r$ we would get convex combinations of the two cases.\footnote{Since we are interested in analyzing the $\delta$-subdifferential set which consists of convex combinations of subgradients, and subgradients are defined as convex combinations of gradients at differentiable points - it is enough to consider convex combinations of gradients at differentiable points in the first place.}
Inspecting the set $\{\nabla g_{\x_t}(\x):\norm{\x-\x_t}<r\}$ through \eqref{eq: nabla g}, we see that it depends on $\x,\x_t$ only through $\x-\x_t$ and that actually
\[\{\nabla g_{\x_t}(\x):\norm{\x-\x_t}<r\}=\{\nabla g_{\zero_d}(\x):\norm{\x}<r\}~,
\]
which is convenient since the latter set does not depend on $\x_t$. Overall, we see that
any gradient of $h$ is in the set
\begin{align*}
\mathcal{G}:=&\left\{
\lambda_1 \sv+ \lambda _2\left(\frac{2\sv^\top \x}{r^2}\x+\frac{\norm{\x}^2}{r^2}\sv
-\frac{2\e_1^\top \x}{r^2}\x
-\frac{\norm{\x}^2}{r^2}\e_1
+\e_1\right)
:\lambda_1,\lambda_2\geq0,\lambda_1+\lambda_2=1,\norm{\x}\leq r
\right\}
\\
=&\left\{
\lambda_1 \sv+ \lambda _2\left({2\sv^\top \x}\cdot \x+{\norm{\x}^2} \sv
-{2\e_1^\top \x}\cdot \x
-{\norm{\x}^2}\e_1
+\e_1\right)
:\lambda_1,\lambda_2\geq0,\lambda_1+\lambda_2=1,\norm{\x}\leq 1
\right\}
\\
=&\left\{
(\lambda_1+\lambda_2\norm{\x}^2) \sv
+2\lambda_2((\sv-\e_1)^\top \x) \x
+\lambda_2(1-\norm{\x}^2) \e_1
:\lambda_1,\lambda_2\geq0,\lambda_1+\lambda_2=1,\norm{\x}\leq 1
\right\}~.
\end{align*}
We aim to show that $\mathrm{conv}(\mathcal{G})$ does not contain any vectors of norm smaller than $\frac{1}{36}$. For $\su\in\mathcal{G}$ with corresponding $\lambda_1,\lambda_2,\x$ as above, it holds that
% If $\norm{u}\geq1$ then there is nothing to show, so we can assume $\norm{u}<1$.
% We have
\begin{align} \label{eq: uv}
\su^\top \sv
&=\lambda_{1}+\lambda_{2}\norm{\x}^2+2\lambda_2(\sv-\e_1)^\top \x\cdot \x^\top \sv 
\nonumber\\
&=\lambda_{1}+\lambda_{2}\norm{\x}^2+2\lambda_2 (\sv^\top \x)^2-2\lambda_2\e_1^\top \x\cdot \x^\top \sv
\nonumber\\
&\geq \lambda_{1}+\lambda_{2}(\sv^\top \x)^2+\lambda_{2}(\e_1^\top \x)^2
+2\lambda_2 (\sv^\top \x)^2-2\lambda_2\e_1^\top \x\cdot \x^\top \sv
\nonumber\\
&=\lambda_{1}+\lambda_2(\sv^\top \x- \e_1^\top \x)^2
+2\lambda_2 (\sv^\top \x)^2
\nonumber\\
&\geq \lambda_{1}+\lambda_2(\sv^\top \x- \e_1^\top \x)^2
\nonumber\\
&\geq \lambda_2(\sv^\top \x- \e_1^\top \x)^2
~.
\end{align}
So for $\sxi\in\conv(\mathcal{G})$ represented as the convex combination $\sxi=\sum_{i=1}^{N}\mu^i \su^i$, with each $\su^i\in\mathcal{G}$ having its corresponding $\lambda_1^i,\lambda_2^i,\x^i$, we get

\begin{align} \label{eq: uv summed}
\sxi^\top \sv&=\sum_{i=1}^{N}\mu^i (\su^i)^\top \sv
\overset{(\ref{eq: uv})}{\geq}
\sum_{i=1}^{N}\mu^i \lambda_2^i((\sv-\e_1)^\top \x^i)^2
\overset{\text{Cauchy-Schwarz}}{\geq} \frac{\left(\sum_{i=1}^{N}\mu^i \lambda_2^i|(\sv-\e_1)^\top \x^i|\right)^2}{\sum_{i=1}^{N}\mu^i\lambda_2^i}
\nonumber
\\
&\geq
\left(\sum_{i=1}^{N}\mu^i \lambda_2^i|(\sv-\e_1)^\top \x^i|\right)^2
~,
\end{align}
where in the last inequality we used $\sum_{i=1}^N\mu^i\lambda_2^i\leq \max_{i}\lambda_2^i\cdot\sum_i\mu^i\leq1$. 
This further gives
\begin{align*}
\sxi^\top(\e_1+\sv)
&=\sum_{i=1}^{N}\mu^i (\e_1+\sv)^\top \su^i
\\&=\sum_{i=1}^{N}\mu^i [1+2\lambda^i_2(\sv^\top \x^i-\e_1^\top \x^i)(\e_1^\top \x^i+\sv^\top \x^i)]
\\
&=1+2\sum_{i=1}^{N}\mu^i \lambda^i_2((\sv-\e_1)^\top \x^i)((\sv+\e_1)^\top \x^i)
\\
&\overset{\|\x^i\|\leq 1}{\geq}
1-4\sum_{i=1}^{N}\mu^i \lambda^i_2|(\sv-\e_1)^\top \x^i|
\\
&\overset{(\ref{eq: uv summed})}{\geq} 1-4\sqrt{\sxi^\top \sv}
~.
\end{align*}
Hence,
given $\sxi\in\conv(\mathcal{G})$ we can assume that $\|\sxi\|\leq 1$ (since otherwise there is nothing left to show), and see that
\begin{gather*}
1
\leq |\sxi^\top (\e_1+\sv)|+4\sqrt{\lambda_2 \sxi^\top \sv}
\leq \norm{\sxi}\cdot \norm{\e_1+\sv}+4\sqrt{\norm{\sxi}}
\leq \sqrt{2}\norm{\sxi}+4\sqrt{\norm{\sxi}}
\overset{\norm{\sxi}<1}{\leq} \sqrt{2\norm{\sxi}}+4\sqrt{\norm{\sxi}}
\\
\implies
\norm{\sxi}\geq \frac{1}{(\sqrt{2}+4)^2}
>\frac{1}{36}~.
\end{gather*}

\end{proof}

Given the previous lemma we can easily finish the proof of the theorem by looking at
\[
f(\x):=\max\left\{\frac{L}{7}h(\x),-\Delta\right\}~.
\]
$f$ is $L$-Lipschitz (since $h$ is $7$-Lipschitz) and satisfies $f(\x_0)-\inf_{\x}f(\x)\leq 0-(-\Delta)=\Delta$, as required. Furthermore, for any $t\in[T]: h(\x_t)=0>-\Delta\implies f(\x_t)=\frac{L}{7}h(\x_t)=0$. Since $f$ is $L$-Lipschitz, this further implies that for any $\x$ such that $\norm{\x-\x_t}<\frac{\Delta}{L}:~f(\x)>-\Delta\implies \partial f(\x)=\frac{L}{7}\partial h(\x)$. In particular, 
$\partial_{\delta} f(\x_t)=\frac{L}{7}\partial_{\delta} h(\x_t)$
for any $\delta<\frac{\Delta}{L}$,
so the lemma shows that $\x_t$ is not a $(\delta,\epsilon)$ stationary point of $f$ for $\epsilon<\frac{L}{7}\cdot\frac{1}{36}=\frac{L}{252}$.

\subsection{Proof of \thmref{Theorem:1LB}} \label{inf_lower_proof}

Let $0<\delta<\epsilon<1$, and let $T<\infty$. It is enough to prove the case $d=1$, since otherwise we can simply look at $\x\mapsto f(x_1)$ with $f$ being the lower bound construction in one dimension.

Suppose that an algorithm has access only to a derivative oracle, and consider the case that for any $t\in[T]$ the oracles response is $f'(\x_t)=1$. Since the algorithm is deterministic this fixes the iterate sequence $Q:=(\x_1,\dots,\x_T)$.
Afterwards, the algorithm returns the candidate solution $\hat{\x}$ for being a $(\delta, \epsilon)$-Goldstein stationary point. We remark that $\hat{\x}$ might not be in $Q$.
We will show that the described resisting strategy is indeed consistent with a function which satisfies the conditions in the theorem. Namely, it suffices to construct a 1-Lipschitz function $f$ such that $ f'(\x_t) = 1$ for all $t\in[T]$ yet $\hat{\x}$ is not a $(\delta, \epsilon)$-Goldstein stationary point.

To that end, let $\eta\in(0,1-\delta)$ be such that $\hat{\x} + \delta + \eta \notin Q$ and $\hat{\x} - \delta - \eta \notin Q$ (recall that $Q$ is finite, thus such $\eta$ exists). We set $f(\x) = \x - \hat{\x}$ for all $\x \in [\hat{\x} - \delta + \eta, \hat{\x} + \delta + \eta]$, which ensures $\partial_\delta f(\hat{\x}) = \{1\}$, and in particular the norm of the minimal-norm element in $\partial_\delta f(\hat{\x})$ is $1$. Since $\epsilon < 1$, we get that $\hat{\x}$ is not a $(\delta, \epsilon)$-Goldstein stationary, as required.
Moreover, for all $\x_t \in Q \cap [\hat{\x} - \delta + \eta, \hat{\x} + \delta + \eta]$, we have $f'(\x_t) = 1$. Thus, for these query points that lie in the interval $[\hat{\x} - \delta + \eta, \hat{\x} + \delta + \eta]$, we satisfy the resisting oracle condition,
  
We continue on to define the function $f(\x)$ for any $\x >\hat{\x} + \delta + \eta$. The idea is to simply keep $f(\x) = \delta + \eta$ in this range while adding some small bumps to guarantee that $ f'(\x_t) = 1$ for all $\x_t \in Q \cap (\hat{\x} + \delta + \eta, \infty)$. Let $\bar{Q} = Q \cup \{\hat{\x} - \delta + \eta, \hat{\x} + \delta + \eta\}$ and $r_1 = \frac{1}{10} \min_{\x,\x' \in \bar{Q}, \x \neq \x'}\{|\x - \x'|\}$, we define $r = \min\{r_1, \delta\}$ and 
\begin{equation*}
f(\x) = 
\begin{cases}
\delta + \eta~, & \forall\x' \in Q:|\x - \x'| > r
\\
\delta + \eta - \x~, & \exists \x' \in Q :|\x - \x'| \leq r \textnormal{~~and~~} \x \leq \x' - \tfrac{r}{2}
\\
\delta + \eta - r + \x~, & \exists \x' \in Q : |\x - \x'| \leq r \textnormal{~~and~~} \x > \x' - \tfrac{r}{2} 
\end{cases}
~.
\end{equation*}
We see from the above definition that $0 \leq f(\x) \leq \delta + \eta$ for all $\x> \hat{\x} + \delta + \eta$ and $f'(\x) = 1$ for all $\x \in Q \cap (\hat{\x} + \delta + \eta, \infty)$. Similarly, we define $f(\x)$ for any $\x<- \hat{\x} - \delta - \eta$ as:
\begin{equation*}
f(\x) = 
\begin{cases}
- \delta - \eta~, & \forall \x' \in Q:|\x - \x'| > r 
\\
- \delta - \eta + \x~, & \exists \x' \in Q : |\x - \x'| \leq r \textnormal{~~and~~} \x \leq \x' + \tfrac{r}{2}
\\
- \delta - \eta + r - \x~, & \exists \x' \in Q :|\x - \x'| \leq r \textnormal{~~and~~} \x > \x' + \tfrac{r}{2}
\end{cases}
~. 
\end{equation*}
Putting all the pieces together, we get that $f$ is a $1$-Lipschitz function satisfying $f'(\x_t) = 1$ for all $t\in[T]$, and $\hat{\x}$ is not a $(\delta, \epsilon)$-Goldstein stationary point for any $0 < \delta < \epsilon < 1$, yielding the desired result.

\subsection{Proof of \thmref{Theorem:SMOOTH}} \label{alg_proof}

We start by concretely stating the purpose of the binary search given by Algorithm~\ref{Alg:BS}.

\begin{lemma}
Suppose $\x\in\reals^d$ and $\g_0\in\partial_\delta f(\x)$ are such that
$f(\x - \delta\tfrac{\g_0}{\|\g_0\|}) - f(\x) > - \tfrac{\delta}{2}\|\g_0\|$ and $\|\g_0\| > \epsilon$. Then $\textsc{Binary-Search}(\delta, \nabla f(\cdot), \g_0, \x)$ terminates within $O(\log(H\delta/\epsilon))$ first-order oracle calls and returns $\g_{\textnormal{new}}\in\partial_{\delta}f(\x)$ such that $\g_{\textnormal{new}}^\top \g_0\leq \frac{3}{4}\|\g_0\|^2$.
\end{lemma}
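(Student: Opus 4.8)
The plan is to analyze the univariate function
\[
  \phi(t) \;:=\; f\!\left(\x - t\tfrac{\g_0}{\norm{\g_0}}\right) + \tfrac{t}{2}\norm{\g_0},
  \qquad t \in [0,\delta].
\]
Since $f$ is differentiable with $H$-Lipschitz gradient, $\phi$ is differentiable with $\phi'(t) = -\nabla f(\x - t\tfrac{\g_0}{\norm{\g_0}})^{\top}\tfrac{\g_0}{\norm{\g_0}} + \tfrac{1}{2}\norm{\g_0}$, which is exactly the quantity tested in the \textbf{while} loop of Algorithm~\ref{Alg:BS}, and $\phi'$ is $H$-Lipschitz on $[0,\delta]$. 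Moreover the hypothesis $f(\x - \delta\tfrac{\g_0}{\norm{\g_0}}) - f(\x) > -\tfrac{\delta}{2}\norm{\g_0}$ is precisely $\phi(\delta) > \phi(0)$, and the \textbf{if}-test inside the loop is precisely the comparison of $\phi(b)$ with $\phi(t)$. I would dispatch the membership claim $\g_{\textnormal{new}}\in\partial_\delta f(\x)$ first, since it is unconditional: the algorithm returns $\nabla f(\x - t\tfrac{\g_0}{\norm{\g_0}})$ for some $t\in[0,\delta]$, and $\x - t\tfrac{\g_0}{\norm{\g_0}} \in B_\delta(\x)$, so this gradient lies in $\partial f(\x - t\tfrac{\g_0}{\norm{\g_0}}) \subseteq \partial_\delta f(\x)$.

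Next I would prove, by induction on the iteration count, the loop invariant that $0 \le a \le b \le \delta$, that $\phi(a) \le \phi(b)$, that $b-a = \delta/2^k$ after $k$ iterations, and that the point $t$ currently fed to the \textbf{while} test is an endpoint of $[a,b]$. The base case follows from $a=0$, $b=\delta$, $\phi(0)<\phi(\delta)$. For the step, set $t=(a+b)/2$: if $\phi(b)>\phi(t)$ the bracket becomes $[t,b]$, whose right-endpoint value exceeds its left-endpoint value; if $\phi(b)\le\phi(t)$ it becomes $[a,t]$, and then $\phi(t)\ge\phi(b)\ge\phi(a)$. Either way the invariant is preserved and the length is halved.

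The crux will be the iteration count. Given the invariant, the mean value theorem produces $\xi\in(a,b)$ with $\phi'(\xi)=\frac{\phi(b)-\phi(a)}{b-a}\ge 0$; since the tested point $t$ is an endpoint of $[a,b]$, $|t-\xi|\le b-a=\delta/2^k$, and $H$-Lipschitzness of $\phi'$ yields $\phi'(t)\ge\phi'(\xi)-H|t-\xi|\ge -H\delta/2^k$. Hence once $k\ge\log_2(4H\delta/\epsilon)$ the tested point satisfies $\phi'(t)\ge -\epsilon/4$ and the loop exits, giving the $O(\log(H\delta/\epsilon))$ bound on the number of iterations, each of which costs $O(1)$ oracle calls. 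Finally, at termination the last tested point obeys $\phi'(t)\ge -\epsilon/4$, i.e., $-\g_{\textnormal{new}}^{\top}\tfrac{\g_0}{\norm{\g_0}}+\tfrac{1}{2}\norm{\g_0}\ge -\tfrac{\epsilon}{4}$; multiplying by $\norm{\g_0}>0$ and using $\epsilon<\norm{\g_0}$ gives $\g_{\textnormal{new}}^{\top}\g_0 \le \tfrac{1}{2}\norm{\g_0}^2 + \tfrac{\epsilon}{4}\norm{\g_0} < \tfrac{3}{4}\norm{\g_0}^2$. The hardest part will be the bookkeeping in the termination step --- correctly matching the point the algorithm actually evaluates to an endpoint of the current bracket so that $|t-\xi|\le b-a$ is legitimate --- together with separately handling the degenerate case in which the loop exits on its very first test (so $t=\delta$), where correctness follows directly from $\phi'(\delta)\ge -\epsilon/4$ without any mean value argument.
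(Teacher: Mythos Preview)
Your proposal is correct and takes essentially the same approach as the paper. Your invariant $\phi(a)\le\phi(b)$ is, via the fundamental theorem of calculus, exactly the paper's invariant that the average of $\langle\nabla f,\g_0\rangle$ over the current segment is at most $\tfrac12\|\g_0\|^2$, and your MVT-plus-Lipschitz step to pass from the invariant to a pointwise bound at the tested endpoint is the same argument the paper makes (more tersely) when going from the segment average to ``all $\xi\in I_k$.''
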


\begin{proof}
Using the first assumption on $\x,\,\g_0$ we apply the fundamental theorem of calculus to see that
\begin{align*}
\frac{1}{2}\norm{\g_0}^2
&\geq
\frac{\norm{\g_0}}{\delta}\left(f(\x)-f\left(\x - \delta\tfrac{\g_0}{\|\g_0\|}\right)\right)
=\frac{1}{\delta}\int_{0}^{\delta}\inner{\nabla f\left(\x-\frac{r}{\norm{\g_0}}\g_0\right),\g_0}dr
\\
&=\E_{\xi\sim U[\x,\x-\frac{\delta}{\norm{\g_0}}\g_0]}\left[\inner{\nabla f(\x+\xi),\g_0}\right]~,
\end{align*}
hence in expectation with respect to the uniform measure over the segment $[\x,\x-\frac{\delta}{\norm{\g_0}}\g_0]$, sampling a gradient will indeed satisfy the required condition. Applying the fundamental theorem again, it is easy to see that the \textbf{if} condition in Algorithm \ref{Alg:BS} checks whether the average gradient along the right half of the segment has larger inner product with $\g_0$ than other half or vise versa, and then continues examining the half with the smaller expected inner product. Thus, after $k$ iterations of this process we are left with a segment $I_k$ of length $2^{-k}\delta$ along which
\[
\E_{\xi\sim U[I_k]}\left[\inner{\nabla f(\x+\xi),\g_0}\right]\leq\frac{1}{2}\norm{\g_0}^2.
\]
But recalling that $\nabla f$ is $H$-Lipschitz,
we get that all gradients of $f$ over $I_k$ are at distance smaller than $H\cdot2^{-k}\delta$ from one another. In particular, for $k=O(\log(H\delta/\epsilon))$ we get that all $\xi\in I$ satisfy
\[
\inner{\nabla f(\x+\xi),\g_0}
\leq\frac{1}{2}\norm{\g_0}^2+\frac{\epsilon^2}{4}
\leq\frac{3}{4}\norm{\g_0}^2~,
\]
where we have applied the second assumption on $\g_0$. Thus the algorithm terminates, and returns $\g_{\textnormal{new}}$ satisfying the required condition.
\end{proof}

Having established the complexity and guarantee produced by the binary search subroutine, we are now ready to analyze $\textsc{Deterministic-Goldstein-SG}(\x_0,\delta,\epsilon)$. Since $\g_{\textnormal{new}}\in\partial_{\delta}f(\x_t)$ and $\partial_{\delta}f(\x)$ is a convex set, we observe that $\g(\x_t)\in\partial_{\delta}f(\x_t)$. Accordingly, we see that whenever the \textbf{while} loop terminates then either $\norm{\g(\x_t)}\leq\epsilon$, meaning that $\x_t$ is a $(\delta,\epsilon)$-Goldstein stationary point, or else $f(\x_{t+1})\leq f(\x_t)-\frac{\delta}{2}\norm{\g(\x_t)}<f(\x_t)-\frac{\delta\epsilon}{2}$. If the former occurs we are done, while the latter can occur at most $\frac{2\Delta}{\delta\epsilon}=O(\frac{\Delta}{\delta\epsilon})$ times by the assumption that $f(\x_0)-\inf_{\x}f(\x)\leq\Delta$.

Hence, it remains to show that the inner loop (lines \textsc{5-7}) is repeated at most $O\left(\frac{L^2\log(L/\epsilon)}{\epsilon^2}\right)$ times per outer loop (namely, per $t$) in order to obtain the desired complexity overall. To that end, assume that $f\left(\x_t - \delta\tfrac{\g(\x_t)}{\|\g(\x_t)\|}\right) - f(\x_t) > - \tfrac{\delta}{2}\|\g(\x_t)\|$
and $\|\g(\x_t)\| > \epsilon$. By the previous lemma, we know that $\g_{\textnormal{new}}^\top \g(\x_t)\leq \frac{3}{4}\|\g(\x_t)\|^2$.
But that being the case, we get by definition of $\h_t$ that for all $\lambda\in[0,1]:$
\begin{align*}
\norm{\h_t}^2
&\leq\norm{\g(\x_t)+\lambda(\g_{\textnormal{new}}-\g(\x_t))}^2
\\
&= \|\g(\x_t)\|^2 + 2\lambda\g(\x_t)^\top(\g_{\textnormal{new}} - \g(\x_t)) + \lambda^2\|\g_{\textnormal{new}} - \g(\x_t)\|^2
\\
&\leq \left(1-2\lambda\right)\|\g(\x_t)\|^2 +
2\lambda\g(\x_t)^\top\g_{\textnormal{new}}
+4L^2
\\
&\leq \left(1-\frac{\lambda}{2}\right)\|\g(\x_t)\|^2+4L^2~.
\end{align*}
By letting $\lambda=\frac{\norm{\g(\x)}^2}{16L^2}$ and recalling that $\epsilon\leq\norm{\g(\x)}\leq L$ we get
\[
\norm{\h_t}^2\leq \left(1-\frac{\epsilon^2}{64L^2}\right)\|\g(\x_t)\|^2~.
\]
Hence each iteration shrinks $\norm{\g(\x_t)}^2$ by a factors of $\left(1-\frac{\epsilon^2}{64L^2}\right)$. Since initially $\norm{\g(\x_t)}^2\leq L^2$, this can happen at most 
$O\left(\frac{L^2\log(L/\epsilon)}{\epsilon^2}\right)$ times before having $\norm{\g(\x_t)}^2<\epsilon^2$, as claimed.

\subsection{Proof of \thmref{thm:circuitSmoothing}} \label{app:circuitSmoothing}

We construct the function $g$ by using exactly the same neural arithmetic circuit of $f$, where we replace all the $\boldsymbol{\relu}$ gates with the $\boldsymbol{\softrelu}$ gates: 
\[
\softrelu_{a}(z)
=\begin{cases}
z~, & z\geq a
\\
\frac{(z+a)^2}{4a}~, & -a\leq z< a
\\
0~,& z< -a
\end{cases}
~,
\]
and note that
\[
\softrelu_{a}(z) 
= \mathbb{E}_{\xi \sim U[- a, a]}[\relu(z + \xi)]~.
\]
The following summarizes the properties of softrelu gates, and can be easily verified.

\begin{lemma} \label{lem:softmax}
We have that (i) $|\relu(z) - \softrelu_{a}(z)| \leq \frac{a}{4}$, (ii) $\softrelu_a(\cdot)$ is $1$-Lipschitz, and (iii) $\softrelu_a$ is $\frac{1}{2 a}$-smooth.
\end{lemma}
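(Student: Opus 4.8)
The plan is to verify all three claims by direct computation, exploiting the fact that $\softrelu_a$ is an explicit piecewise‑polynomial function. The first step is to record its derivative: differentiating the three branches in the definition gives
\[
\softrelu_a'(z) = \begin{cases} 1, & z \geq a \\ \tfrac{z+a}{2a}, & -a \leq z < a \\ 0, & z < -a \end{cases}
\]
and one checks that this is continuous (the middle branch equals $0$ at $z=-a$ and $1$ at $z=a$), so $\softrelu_a$ is in fact continuously differentiable. Claim (ii) is then immediate, since $\softrelu_a'(z)\in[0,1]$ for every $z$, so $\softrelu_a$ is $1$-Lipschitz. Claim (iii) is equally direct: $\softrelu_a'$ is piecewise linear and continuous with slopes $0$, $\tfrac{1}{2a}$, $0$ on the three pieces, hence $\tfrac{1}{2a}$-Lipschitz, which is exactly $\tfrac{1}{2a}$-smoothness of $\softrelu_a$.

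For claim (i) I would split into the three regimes defining $\softrelu_a$. On $z\geq a$ and on $z<-a$ one checks that $\softrelu_a(z)=\relu(z)$ exactly (both equal $z$ in the first case, both equal $0$ in the second), so the difference vanishes. On $-a\leq z<a$ I would further split at $z=0$: for $0\leq z<a$ a short computation gives $\softrelu_a(z)-\relu(z)=\tfrac{(z+a)^2}{4a}-z=\tfrac{(z-a)^2}{4a}$, and for $-a\leq z<0$ one gets $\softrelu_a(z)-\relu(z)=\tfrac{(z+a)^2}{4a}$; on the relevant interval the bound $(z\mp a)^2\leq a^2$ yields $0\leq \softrelu_a(z)-\relu(z)\leq \tfrac{a}{4}$ in both cases.

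Alternatively, and somewhat more conceptually, one can lean on the averaging identity $\softrelu_a(z)=\E_{\xi\sim U[-a,a]}[\relu(z+\xi)]$ stated above the lemma: since $\relu$ is convex, monotone and $1$-Lipschitz, so is any average of its shifts, giving (ii) at once; and differentiating under the expectation gives $\softrelu_a'(z)=\E_{\xi}[\relu'(z+\xi)]=\Pr_{\xi\sim U[-a,a]}[z+\xi>0]$, whose Lipschitz constant in $z$ is the maximal density $\tfrac{1}{2a}$, yielding (iii). I do not anticipate any real obstacle; the only care needed is the case analysis at the kinks $z=\pm a$ and $z=0$ in order to obtain the sharp constant $\tfrac{a}{4}$ in (i), rather than the crude $\tfrac{a}{2}$ that the one-line estimate $\E|\xi|=\tfrac{a}{2}$ would give.
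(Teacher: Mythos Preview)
Your proposal is correct and is exactly the direct verification the paper has in mind; the paper itself gives no proof beyond ``can be easily verified,'' and your piecewise computation (together with the alternative expectation argument) supplies precisely that verification with the sharp constants.
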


We prove the theorem inductively, going through the gates of $\calC$ one by one with respect to a topological sorting of $\calC$ which will remain fixed throughout the proof. Our goal is to compare the evaluation of the nodes, in the order of this topological sorting, in the circuit of $f$ and in the circuit of $g$ under Assumption~\ref{asp:recursiveLipschitz}. We denote by $f_i$ be the function evaluated in the node $i$ of the circuit of $f$, in the topological sorting of the circuit of $f$, and by $g_i$ the corresponding function evaluated in the node $i$ of the circuit of $g$. Let also $L_i > 0$ be the corresponding Lipschitz parameter of $f_i$ and $G_i > 0$ the corresponding value bound parameter. 

As to the base of our induction, we examine the input nodes. The input nodes and the constant gates for $f$ and $g$ both have the same value, and have gradient $\e_j$ for some $j$. Thus, they are $0$-smooth. Also, the input nodes are $1$-Lipschitz and are bounded in value by the diameter of $\calR$, while the constant nodes are clearly $0$-Lipschitz. This will serve as the basis of our induction.

Our inductive hypothesis is that the following is satisfied for $i:$ For any $\x \in \calR$, any $j < i$, it holds that (i) $|f_j(\x) - g_j(\x)| \leq \gamma_j$, (ii) $g_j$ is $S_j$-smooth, (iii) $g_j$ is $L_j$-Lipschitz, and (iv) $g_j$ is $G_j$-bounded. Then, we seek to prove that (i) $|f_i(\x) - g_i(\x)| \leq \gamma_i$ for all $\x \in \calR$, (ii) $g_i$ is $S_i$-smooth, (iii) $g_i$ is $L_i$-Lipschitz, and (iv) $g_i$ is $G_i$-bounded, while bounding $\gamma_i,S_i,L_i,G_i$ as functions of the previous parameters.

We consider different cases according to the type of node $i$: 
\begin{itemize}
\item \textbf{output node.} In this case, the value and Lipschitz constant of node $i$ is the same as of a node $j < i$. Thus, we have $|f_i(\x) - g_i(\x)| \le \gamma_j =: \gamma_i$ and obtain that $S_i = S_j$, $L_i = L_j$, and $G_i$ = $G_j$.
\item \textbf{$\times$ node.} In this case, there exists $j, k < i$ such that $f_i(\x) = f_j(\x) \cdot f_k(\x)$ and $g_i(\x) = g_j(\x) \cdot g_k(\x)$ which means that 
\begin{align*}
  |f_i(\x) - g_i(\x)| & = |f_j(\x) \cdot f_k(\x) - g_j(\x) \cdot g_k(\x)| \\
  & \le |f_j(\x)| \cdot |f_k(\x) - g_k(\x)| + |g_k(\x)| \cdot |f_j(\x) - g_j(\x)| \\
  & \le G_j \cdot \gamma_k + \gamma_j \cdot G_k =: \gamma_i~.
\end{align*}
Also, $S_i \le S_j \cdot G_k + G_j \cdot S_k + 2 L_j \cdot L_k$. The Lipschitz constant of $g_i$ is upper bounded by $L_j \cdot G_k + G_j \cdot L_k$ which is equal to $L_i$ by Assumption \ref{asp:recursiveLipschitz}. Thus, $g_i$ is $L_i$-Lipschitz. It is also easy to see that $g_i$ is $G_i = G_j \cdot G_k$ bounded. 

\item \textbf{$+$ node.} In this case, there exist $j, k < i$ such that $f_i(\x) = f_j(\x) + f_k(\x)$ and $g_i(\x) = g_j(\x) + g_k(\x)$ which means that $|f_i(\x) - g_i(\x)| \le \gamma_j + \gamma_k =: \gamma_i$. Also, $S_i \le S_j + S_k$. Then, the Lipschitz constant of $g_i$ is upper bounded by $L_j + L_k$ which is equal to $L_i$ by Assumption \ref{asp:recursiveLipschitz}. Thus, $g_i$ is $L_i$-Lipschitz and is also easy to see that it is $G_i = G_j + G_k$ bounded.

\item \textbf{$\relu$ node.} In this case, there exists $j < i$ such that $f_i(\x) = \relu(f_j(\x))$ and $g_i(\x) = \softrelu(g_j(\x))$. Using Lemma \ref{lem:softmax}, the triangle inequality and the fact that $\relu$ is $1$-Lipschitz, we have $|f_i(\x) - g_i(\x)| \le \frac{a}{4} + \gamma_j =:\gamma_i$. The next is to bound the smoothness $S_i$. By definition, we have
\begin{equation*}
\nabla g_i(\x) = \nabla \softrelu_a (g_j(\x)) = \softrelu_a'(g_j(\x)) \nabla g_j(\x)~,
\end{equation*}
hence
\begin{eqnarray*}
\norm{\nabla g_i(\x) - \nabla g_i(\y)}
& = & \|\softrelu_a'(g_j(\x)) \nabla g_j(\x) - \softrelu_a'(g_j(\y)) \nabla g_j(\y)\| \\
& \le & |\softrelu_a'(g_j(\x))|\cdot \|\nabla g_j(\x) - \nabla g_j(\y)\| 
\\
&&+~ |\softrelu_a'(g_j(\x)) - \softrelu_a'(g_j(\y))| \cdot\| \nabla g_j(\y)\| \\
& \leq & (G_j \cdot S_j + \frac{1}{2 a} L_j)\|\x - \y\|.  
\end{eqnarray*}
So, $S_i \le G_j \cdot S_j + \frac{1}{2 a} L_j$.
Also, due to the fact that $\softrelu$ is $1$-Lipschitz, the Lipschitzness of $g_i$ is upper bounded by $L_j$ which is equal to $L_i$ by Assumption \ref{asp:recursiveLipschitz}. Thus, $g_i$ is $L_i$-Lipschitz. Finally, $g_i$ is obviously $G_j$ bounded and hence $G_i$ bounded by Assumption \ref{asp:recursiveLipschitz}.
\end{itemize}
Note that the sequence of errors $\{\gamma_i\}_{i \geq 1}$ is increasing. Going through all the cases we considered above,  we see that
$\gamma_i \leq \frac{a}{4} + G_k \gamma_j + G_j \gamma_k \leq \frac{1}{a} + 2 G \gamma_{i - 1}$
which implies that $\gamma_i \le a \cdot (2 G)^i$.
Thus, we have
\[
|f(\x) - g(\x)| \leq a \cdot (2 G)^{s(\calC)}~.
\]
Similarly, we have that $S_i \leq 2 G \cdot S_{i - 1} + 2 L^2 + \frac{L}{a}$, so as long we set $\frac{1}{a}$ large enough compared to $2 L^2$ (which will indeed be the case later on) we can simplify this to  $S_i \leq 2 G \cdot S_{i - 1} + 2 \frac{L}{a}$, hence $S_i \le 2 \frac{L}{a} \cdot (2 G)^{i}$. Thus, we have 
\[
S_i \leq \frac{2 L}{a} \cdot (2 G)^{s(\calC)}~.
\]
\smallskip

Next, we proceed to prove the equivalence between the Goldstein stationary points of $f$ and $g$. Towards this goal we introduce some notation: let $n = s(\calC)$, let $\mathcal{B}_i$ be all the bias variables that are used in the topological ordering of the neural circuit in nodes before the node $i$ and $b$ be the total number of bias variables, i.e., $b = |\mathcal{B}_n|$. We relate any $(\delta_i,\epsilon_i)$-Goldstein stationary of $g_i$ to those of $f_i$ through the following lemma.

\begin{lemma} \label{lem:stationaryEquivalence}
  Let $\x \in \calR$. Then there exists two positive sequences $\delta_1, \dots, \delta_n>0$ and $\epsilon_1, \dots, \epsilon_n>0$, a sequence of vectors $\s_1, \dots, \s_n$, and a sequence of distributions $\mathcal{D}_1^{\x}, \dots, \mathcal{D}_n^{\x}$ supported on $[-a , a]^{s(\calC)}$ such that for all $i \in [s(\calC)]$ the following hold:
  \begin{itemize}
        \item $\delta_i \le (40 \cdot L \cdot G)^{s(\calC)} \cdot a,~~\epsilon_i \le (160 \cdot L \cdot G)^{3 s(\calC)} \cdot a$ and $\norm{\s_i}_2 \le \epsilon_i$.
      
      \item $\nabla g_i(\x) = \Exp_{\y \sim \mathcal{D}_i^{\x}}[\nabla f_i(\y)] + \s_i$.
      \item The support of $\mathcal{D}_i^{\x}$ has diameter $\delta_i$ and contains only points where $f_k$ is differentiable for all $k$. Moreover, the support of $\mathcal{D}_i^{\x}$ only contains points $\y$ for which either $y_i = x_i$ for all $i$ that correspond to input variables that are not biases, or are biases which are not used in the computation of $f_i$, $g_i$.
  \end{itemize}
\end{lemma}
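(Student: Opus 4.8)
\textbf{Proof proposal for Lemma~\ref{lem:stationaryEquivalence}.} The plan is to prove this by induction along the fixed topological ordering of $\calC$, exactly paralleling the earlier inductive construction of $g_i$, $\gamma_i$, $S_i$. For each node $i$ we must simultaneously exhibit the distribution $\mathcal{D}_i^{\x}$, the residual vector $\s_i$, and the parameters $\delta_i,\epsilon_i$, and verify the three bullet points. The base cases are input nodes, bias nodes and constant gates: for an input or bias node $g_i(\x)=f_i(\x)$ is linear, so we take $\mathcal{D}_i^{\x}$ to be the point mass at $\x$ (diameter $0$) and $\s_i=\zero$; for a bias node we instead view it as a point mass that is free to be shifted, which is exactly what the last bullet's ``are biases which are not used in the computation'' clause is reserving room for. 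The constant gate is handled identically with zero gradient.

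For the inductive step I would split into the four gate types, just as in the smoothness analysis. The $\relu/\softrelu$ node is where the key identity lives: since $g_i=\softrelu_a(g_j)$ and $\softrelu_a(z)=\Exp_{\xi\sim U[-a,a]}[\relu(z+\xi)]$, the bias attached to this $\relu$ (unique by Definition~\ref{def:AC}) is precisely the smoothing variable, so $\nabla g_i(\x)$ can be rewritten as an expectation of $\nabla f_i$ over $\y$ whose only perturbed coordinate is that fresh bias, convolved with the inductive distribution $\mathcal{D}_j^{\x}$ (plus the inductive residual $\s_j$ propagated through the $\softrelu_a'$ factor, which has absolute value $\le 1$). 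This is where the ``unique bias per $\relu$'' hypothesis is essential: because the bias has not appeared before, $\mathcal{D}_j^{\x}$ is supported on points where that coordinate is untouched, so the two smoothing operations are over disjoint coordinates and the convolution is clean; the diameter adds and we set $\delta_i\le\delta_j+2a\sqrt{b}$, while the error picks up at most the $\nabla f_i$–variation across the enlarged support, giving the recursion $\epsilon_i\lesssim L\cdot\delta_i+\epsilon_j$. For the $+$ node, $\nabla g_i=\nabla g_j+\nabla g_k$, but $\mathcal{D}_j^{\x}$ and $\mathcal{D}_k^{\x}$ may perturb overlapping biases (those in the shared sub-DAG); here I would take $\mathcal{D}_i^{\x}$ to be a coupling/product over the union of bias sets — using that $\nabla f_j$ only depends on $\mathcal{B}_j$-coordinates and $\nabla f_i$ only on $\mathcal{B}_i=\mathcal{B}_j\cup\mathcal{B}_k$ — and bound the mismatch between $\Exp[\nabla f_j]+\Exp[\nabla f_k]$ and $\Exp[\nabla f_i]$ again by a Lipschitz argument absorbed into $\s_i$. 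The $\times$ node is the messiest: $\nabla g_i=g_j\nabla g_k+g_k\nabla g_j$, and one pushes the two expectations together, using $|g_j|\le G_j$, $|g_j-f_j|\le\gamma_j$, $\norm{\nabla f_k}\le L_k$ etc., to write $\nabla g_i=\Exp_{\y\sim\mathcal{D}_i^{\x}}[f_j(\y)\nabla f_k(\y)+f_k(\y)\nabla f_j(\y)]+\s_i=\Exp[\nabla f_i(\y)]+\s_i$, with $\norm{\s_i}$ controlled by $G_j\epsilon_k+G_k\epsilon_j+L_k\gamma_j+L_j\gamma_k$ plus coupling error. Unrolling all four recursions for $\delta_i$ and $\epsilon_i$ against the geometric growth already present in $\gamma_i\le a(2G)^i$ and the crude bounds $L_i,G_i\le(LG)^{O(n)}$ yields $\delta_i\le(40LG)^{n}a$ and $\epsilon_i\le(160LG)^{3n}a$, matching the stated constants up to the slack in ``$O$''.

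Throughout, the third bullet — that $\mathcal{D}_i^{\x}$ is supported on points of differentiability of every $f_k$, and that it perturbs only biases not yet used — must be carried as part of the inductive invariant, since it is what makes the $\nabla f_i$ inside the expectation well-defined and what makes the disjoint-coordinate argument at $\relu$ and $+$ nodes go through. Differentiability almost-surely is automatic because each $\mathcal{D}_i^{\x}$ is (a pushforward of) a product of uniform densities on intervals, hence absolutely continuous in the perturbed coordinates, and the nondifferentiability locus of a piecewise-polynomial circuit is a finite union of lower-dimensional semialgebraic sets; one only needs to check it is genuinely lower-dimensional in the perturbed directions, which holds because a generic bias shift moves each $\relu$ preactivation off its kink.

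The main obstacle I expect is the bookkeeping at the $+$ and $\times$ gates when the two incoming sub-circuits share biases: the distributions $\mathcal{D}_j^{\x}$ and $\mathcal{D}_k^{\x}$ are defined over different coordinate sets with a nontrivial intersection, and one must specify a single joint distribution $\mathcal{D}_i^{\x}$ over $\mathcal{B}_i$ whose marginals reproduce (up to a controlled error) the inductive gradient identities for both $j$ and $k$ \emph{and} whose pushforward computes $\nabla f_i$. Getting the diameter and error bounds to still close the induction — rather than blowing up by a factor depending on how many times a bias is reused — is the delicate point; the resolution is that we never need the marginals to match exactly, only up to $\s_i$, so we can afford to fix the shared biases to a common draw and let the resulting Lipschitz discrepancy feed into $\epsilon_i$, which is why the error recursion is the one that accumulates the $(LG)^{O(n)}$ blow-up while the diameter recursion stays additively $O(a\cdot n\sqrt{b})=(LG)^{O(n)}a$.
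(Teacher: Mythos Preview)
Your inductive plan along the topological order with a gate-by-gate case split is the paper's strategy, and your identification of the $\relu$ node as the place where the bias-smoothing identity enters is correct. However, your $\relu$ step has a genuine gap: the recursion $\epsilon_i\lesssim L\cdot\delta_i+\epsilon_j$ is \emph{not} what your uniform-bias-noise construction actually yields. Writing out $\s_i=\nabla g_i(\x)-\Exp_{\y,\xi}[\nabla f_i(\y_{-b_i},x_{b_i}+\xi)]$, the dominant term is
\[
\Exp_{\y}\Big[\big(\softrelu_a'(g_j(\x)+x_{b_i})-\softrelu_a'(f_j(\y)+x_{b_i})\big)\,(\nabla f_j(\y)+\e_{b_i})\Big],
\]
and since $\softrelu_a'$ is $\tfrac{1}{2a}$-Lipschitz while $|g_j(\x)-f_j(\y)|\le \gamma_j+L_j\delta_j=\Theta(a)$, this term has norm $\Theta\big((L_j{+}1)(L_j\sqrt{b}+(2G)^j)\big)$, \emph{independent of $a$}. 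Consequently your $\epsilon_n$ is $\Theta(\poly(L,G,s(\calC)))$ rather than $O((160LG)^{3s(\calC)}\cdot a)$, and cannot be driven below the target $\epsilon$ by shrinking $a$ --- which is precisely what the downstream claim $\epsilon'=2\epsilon$ in \thmref{thm:circuitSmoothing} requires.

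The paper circumvents this by \emph{not} convolving with independent uniform noise on the fresh bias. It instead performs a three-case split on whether $|g_j(\x)+x_{b_i}|$ exceeds $a+\gamma_j+L\delta_j$; in the two far-from-kink cases $\mathcal{D}_i^{\x}=\mathcal{D}_j^{\x}$ already works with no added error, and in the near-kink case it constructs, \emph{for each drawn $\y$}, a bespoke distribution on $y'_{b_i}$ such that $\Pr(f_j(\y)+y'_{b_i}\ge 0)$ equals $\zeta_i(\x):=\softrelu_a'(g_j(\x)+x_{b_i})$ exactly. This matches the indicator probabilities perfectly, giving $\s_i=\zeta_i(\x)\s_j$ and hence $\epsilon_i\le\epsilon_j$ at every $\relu$. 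The price is that $y'_{b_i}$ must range over an interval of width $\Theta(a+\gamma_j+L\delta_j)$, so the diameter recursion becomes multiplicative, $\delta_i\le(4L+1)\delta_j+O((2G)^i a)$ --- this, not your additive $\delta_j+2a$, is the origin of the $(40LG)^{s(\calC)}a$ bound. Incidentally, the paper handles your ``main obstacle'' at $+$ and $\times$ gates without any coupling construction: it simply sets $\mathcal{D}_i^{\x}=\mathcal{D}_j^{\x}$ for the larger-index input $j$ and invokes $\mathcal{B}_k\subseteq\mathcal{B}_j$.
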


\begin{proof}
  We prove this lemma by induction on $i$. For the base of the induction we observe that all the input and constant nodes are in the beginning of the topological ordering, and satisfy $g_i = f_i$, and that $f_i$ is differentiable. For this reason, input and constant nodes satisfy the lemma with $\mathcal{D}_i^{\x}$ equal to the Dirac delta distribution at $\x$, $\s_i = 0$, $\delta_i = 0$, $\epsilon_i = 0$. Now for the inductive step we assume that the lemma holds for all $j < i$ and we split into the following cases for the node $i$, depending on the type of the node in the circuit.
  \begin{itemize}
  \item \textbf{output node.} In this case, we have that $g_i = g_j$, $f_i = f_j$  hence the lemma follows immediately by the inductive hypothesis.
  
  \item \textbf{$+$ node.} In this case, we have that $g_i = g_j + g_k$, $f_i = f_j + f_k$. By the inductive hypothesis we get
  \[ \nabla g_j(\x) = \Exp_{\y \sim \mathcal{D}_j^{\x}}[\nabla f_j(\y)] + \s_j~, \quad \quad \quad \quad \nabla g_k(\x) = \Exp_{\y \sim \mathcal{D}_k^{\x}}[\nabla f_k(\y)] + \s_k~. \]
  Now let's assume without loss of generality that $j \ge k$, then we set $\mathcal{D}_i^{\x} = \mathcal{D}_j^{\x}$ and from the fact that $\mathcal{B}_j$ is a super set of $\mathcal{B}_k$ and from linearity of expectation we get
  \begin{align*}
    \nabla g_i(\x) &= \nabla g_j(\x) + \nabla g_k(\x) = \Exp_{\y \sim \mathcal{D}_i^{\x}}[\nabla f_j(\y) + \nabla f_k(\y)] + \s_j + \s_k 
    \\&= \Exp_{\y \sim \mathcal{D}_i^{\x}}[\nabla f_i(\y)] + \s_j + \s_k~,
  \end{align*}
   hence the lemma holds for this $i$ with $\delta_i \le \max\{\delta_j, \delta_k\},~\s_i = \s_j + \s_k,~\epsilon_i = \epsilon_j + \epsilon_k$.
  \item \textbf{$\times$ node.} In this case, we have $g_i = g_j \cdot g_k$, $f_i = f_j \cdot f_k$ and by inductive hypothesis we get that
  \[ \nabla g_j(\x) = \Exp_{\y \sim \mathcal{D}_j^{\x}}[\nabla f_j(\y)] + \s_j, \quad \quad \quad \quad \nabla g_k(\x) = \Exp_{\y \sim \mathcal{D}_k^{\x}}[\nabla f_k(\y)] + \s_k. \]
  Recall that $g_i$ is differentiable, hence
  \[
  \nabla g_i(\x) = g_j(\x) \nabla g_k(\x) + g_k(\x) \nabla g_j(\x)~.
  \]
  Also, because we will only consider points $\x$ for which all $f_i$'s are also differentiable we have that 
  \[
  \nabla f_i(\x) = f_j(\x) \nabla f_k(\x) + f_k(\x) \nabla f_j(\x)~.
  \]
  Let's assume without loss of generality that $j \ge k$, then we set $\mathcal{D}_i^{\x} = \mathcal{D}_j^{\x}$ and from the fact that $\mathcal{B}_j$ is a super set of $\mathcal{B}_k$ we have that
  \begin{align}
  \nabla g_j(\x) = \Exp_{\y \sim \mathcal{D}_i^{\x}}[\nabla f_j(\y)] + \s_j~, \quad \quad \quad \quad \nabla g_k(\x) = \Exp_{\y \sim \mathcal{D}_i^{\x}}[\nabla f_k(\y)] + \s_k~. \label{eq:intermediateFunction:1}
  \end{align}
  Using \eqref{eq:intermediateFunction:1}, the gradient of $g_i$ and linearity of expectation we have
  \begin{align}
  \nabla g_i(\x) & = g_j(\x) \Exp_{\y \sim \mathcal{D}_i^{\x}}[\nabla f_k(\y)] + g_k(\x) \Exp_{\y \sim \mathcal{D}_i^{\x}}[\nabla f_k(\y)] + g_k(\x) \s_j + g_j(\x) \s_k \nonumber
  \\ 
  & = \Exp_{\y \sim \mathcal{D}_i^{\x}}[g_j(\x) \nabla f_k(\y) + g_k(\x) \nabla f_k(\y)] + g_k(\x) \s_j + g_j(\x) \s_k~.
\end{align}
At this point we invoke Assumption \ref{asp:recursiveLipschitz} to utilize that $g_j$, $g_k$ are $G$-bounded and that $f_j$, $f_k$ are $L$-Lipschitz and hence we have that
  \begin{align}
  \norm{\nabla g_i(\x) - \Exp_{\y \sim \mathcal{D}_i^{\x}}[\nabla f_i(\y)]}_2 & \le L \cdot \left( \Exp_{\y \sim \mathcal{D}_i^{\x}}[|g_j(\x) - f_j(\y)|] + \Exp_{\y \sim \mathcal{D}_i^{\x}}[|g_k(\x) - f_k(\y)|] \right) \nonumber \\
  & \quad \quad + G \cdot (\epsilon_j + \epsilon_k)~.
  \label{eq:intermediateFunction:2}
  \end{align}
  So it remains to bound $\Exp_{\y \sim \mathcal{D}_i^{\x}}[|g_j(\x) - f_j(\y)|]$ and $\Exp_{\y \sim \mathcal{D}_i^{\x}}[|g_k(\x) - f_k(\y)|]$. We will prove an upper bound on $\Exp_{\y \sim \mathcal{D}_i^{\x}}[|g_j(\x) - f_j(\y)|]$, while the same upper bound will work for for $k$ as well. First, observe that because of the structure of the biases and the definition of $\mathcal{D}_i^{\x}$ we have that
  \[\Exp_{\y \sim \mathcal{D}_i^{\x}}[|g_j(\x) - f_j(\y)|] = \Exp_{\y \sim \mathcal{D}_j^{\x}}[|g_j(\x) - f_j(\y)|]~.\]
  Now we get that
  \begin{align}
    \Exp_{\y \sim \mathcal{D}_j^{\x}}[|g_j(\x) - f_j(\y)|] & \le \Exp_{\y \sim \mathcal{D}_j^{\x}}[|g_j(\x) - g_j(\y)|] + \Exp_{\y \sim \mathcal{D}_j^{\x}}[|g_j(\y) - f_j(\y)|]~.
    \end{align}
    We can now use the first statement of the theorem that we have already proved to recall that $|g_j(\y) - f_j(\y)| \le \gamma_j \le (2 G)^j \cdot a$, and we can also use the Lipschitz constant of $g_j$ to get that
    \begin{align}
    \Exp_{\y \sim \mathcal{D}_j^{\x}}[|g_j(\x) - f_j(\y)|] & \le L \cdot \delta_j + (2 G)^i \cdot a ~.
    \label{eq:intermediateFunction:3}
  \end{align}
  Combining \eqref{eq:intermediateFunction:2} and \eqref{eq:intermediateFunction:3} we obtain
  \[ \norm{\nabla g_i(\x) - \Exp_{\y \sim \mathcal{D}_i^{\x}}[\nabla f_i(\y)]}_2 \le L^2 \cdot \delta_j + L \cdot (2 G)^i \cdot a + G \cdot (\epsilon_j + \epsilon_k)~, \]
  and the lemma follows for this case as well with $\delta_i \le \max\{\delta_j, \delta_k\}$ and $\epsilon_i \le  L^2 \cdot \delta_j + L \cdot (2 G)^i \cdot a + G \cdot (\epsilon_j + \epsilon_k)$.
  \item \textbf{$\relu$ node.} In this case we use the structure of the bias variables, and see that $g_i(\x) = \softrelu(g_j(\x) + x_{b_i})$ and $f_i(\x) = \relu(f_j(\x) + x_{b_i})$ where $b_i$ is the index of the vector $\x$ that corresponds to the bias variable appearing only in the relu-node $i$. From the definition of softrelu we have that $g_i(\x) = \Exp_{u \sim U[-a, a]}[\relu(g_j(\x) + x_{b_i} + u)]$. Using the fact that we only focus on $\x$ for which $f_j$ is differentiable we get that
  \begin{align*}
  \nabla g_i(\x) & = \Exp_{u \sim U[-a, a]}[\mathbf{1}\{g_j(\x) + x_{b_i} + u \ge 0\} (\nabla (g_j(\x) + x_{b_i}))] \\
  & = \Exp_{u \sim U[-a, a]}[\mathbf{1}\{g_j(\x) + x_{b_i} + u \ge 0\}] (\nabla (g_j(\x) + x_{b_i})) \\
  & = \Pr_{u \sim U[-a, a]}(g_j(\x) + x_{b_i} + u \ge 0) (\nabla (g_j(\x) + x_{b_i}))~, \\
  \end{align*}
  and also
  \[ \nabla f_i(\x) = \mathbf{1}\{f_j(\x) + x_{b_i} \ge 0\} (\nabla (f_j(\x) + x_{b_i}))~. \]
  From the inductive hypothesis and the fact that every bias variable appears only once we get
  \[ \nabla (g_j(\x) + x_{b_i}) = \Exp_{\y \sim \mathcal{D}_i^{\x}}[\nabla (f_j(\y) + x_{b_i})] + \s_j~, \]
  so by denoting $\zeta_i(\x) = \Pr_{u \sim U[-a, a]}(g_j(\x) + x_{b_i} + u \ge 0)$ the above implies that
  \begin{align*}
  \nabla g_i(\x) & = \zeta_i(\x) \cdot (\Exp_{\y \sim \mathcal{D}_i^{\x}}[\nabla (f_j(\y) + x_{b_i})] + \s_j)~.
  \end{align*}
  We now need to distinguish several cases: (1) $g_j(\x) + x_{b_i} \ge a + \gamma_j + L \delta_j$, (2) $g_j(\x) + x_{b_i} \le - a - \gamma_j - L \delta_j$, and (3) $|g_j(\x) + x_{b_i}| \le a + \gamma_j + L \delta_j$. 
  
  We start with the first case. If $g_j(\x) + x_{b_i} \ge a + \gamma_j + L \delta_j$ then this means that $\zeta_i(\x) = 1$ and that $f_j(\y) + x_{b_i} \ge 0$ for all $\y$ that are $\delta_j$-close to $\x$. This implies that we can choose $\mathcal{D}_i^{\x} = \mathcal{D}_j^{\x}$ and we immediately get
  \[ \nabla g_i(\x) = \Exp_{\y \sim \mathcal{D}_i^{\x}}[\nabla f_i(\y)] + \s_j ~,\]
  hence the lemma holds with $\delta_i = \delta_j$, $\epsilon_i = \epsilon_j$ and $\s_i = \s_j$.

  Similarly, for the second case we have that $\zeta_i(\x) = 0$ and $f_j(\y) + x_{b_i} \le 0$ for all $\y$ that are $\delta_j$-close to $\x$. In this case we have that
  \[ \nabla g_i(\x) = 0 = \Exp_{\y \sim \mathcal{D}_i{\x}}[\nabla f_i(\y)]~, \]
  hence the lemma holds with $\delta_i = \delta_j$, $\epsilon_i = \epsilon_j$ and $\s_i = 0$.

  Finally, we consider the case $|g_j(\x) + x_{b_i}| \le a + \gamma_j + L \delta_j$. This implies that $|f_j(\y) + x_{b_i}| \le a + 2 \gamma_j + 2 L \delta_j$ for all $\y$ that are $\delta_j$-close to $\x$. We define the distribution $\mathcal{D}_i^{\x}$ as follows: we first sample $\y$ from $\mathcal{D}_j^{\x}$, and let $\y_{-b_j}$ be the vector $\y$ with all the coordinates but $b_j$. We then sample $y'_{b_j}$ from a distribution such that with probability $\zeta_i(\x)$ it holds that $f(\y) + y'_{b_j} \ge \eta$ for some value $\eta > 0$ and with probability $1 - \zeta_i(\x)$ it holds that $f(\y) + y'_{b_j} \le - \eta$. We then observe the following: (a) by Rademacher's theorem we now that all $f_j$'s are almost everywhere differentiable, so for an arbitrarily small value $\eta$ and for every $\y$ we can find values $y'_{b_j}$ such that what we want holds and also all the functions $f_j$ are differentiable in $(\y_{-b_j}, y'_{b_j})$, and (b) the desired values $y'_{b_j}$ are at most $a + 2 \gamma_j + 2 L \delta_j$ away from $x_{b_j}$, hence at most $2 a + 4 \gamma_j + 4 L \delta_j$ away from each other. Also, from the definition of $\mathcal{D}_i^{\x}$ we have that
  \begin{align*}
    \Exp_{\y \sim \mathcal{D}_i^{\x}}[\nabla f_i(\y)] & = \Exp_{\y \sim \mathcal{D}_i^{\x}}[\mathbf{1}\{f_j(\y) + y_{b_i} \ge 0\} (\nabla f_j(\y) + \e_{b_i})] \\
    & = \Exp_{\y \sim \mathcal{D}_j^{\x}}[\Exp_{y'_{b_i}}[\mathbf{1}\{f_j(\y_{-b_i}) + y'_{b_i} \ge 0\} (\nabla f_j(\y) + \e_{b_i})]] \\
    & = \Exp_{\y \sim \mathcal{D}_j^{\x}}[\Exp_{y'_{b_i}}[\mathbf{1}\{f_j(\y_{-b_i}) + y'_{b_i} \ge 0\}] (\nabla f_j(\y) + \e_{b_i})] \\
    & = \Exp_{\y \sim \mathcal{D}_j^{\x}}[\Pr_{y'_{b_i}}(f_j(\y_{-b_i}) + y'_{b_i} \ge 0) (\nabla f_j(\y) + \e_{b_i})] \\
    & = \Exp_{\y \sim \mathcal{D}_j^{\x}}[\zeta_i(\x) (\nabla f_j(\y) + \e_{b_i})] \\
    & = \zeta_i(\x) \Exp_{\y \sim \mathcal{D}_j^{\x}}[(\nabla f_j(\y) + \e_{b_i})] \\
    & = \zeta_i(\x) (\nabla g_j(\y) + \e_{b_i}) - \zeta_i(\x) \s_j \\
    & = \nabla g_i(\x) - \zeta_i(\x) \s_j~,
  \end{align*}
  where in the first line we used the definition of the distribution $\mathcal{D}_i^{\x}$, in the second line we use the fact that the bias variable $y_{b_i}$ does not appear in the computation of $f_j$, in the fourth line we use the definition of the distribution of $y'_{b_i}$ given $\y$, and in the rest we use the definition of $\zeta_i$ and our inductive hypothesis. Overall we get that the lemma follows with $\delta_i \le \delta_j + 2 a + 4 \gamma_j + 4 L \delta_j$, $\epsilon_i \le \epsilon_j$, and $\s_i = \zeta_i(\x) \cdot \s_j$.
\end{itemize}
  To conclude, using the fact that both $\delta_i$ and $\epsilon_i$ are increasing according to the definitions above and using the worst bounds from all these cases we get 
  \[ \delta_i \le \delta_{i - 1} (4 L + 1) + (8 G)^{s(\calC)} a~, \]
  implying that 
  \[\delta_n \le (40 \cdot L \cdot G)^{s(\calC)} \cdot a~. \]
  Using this bound we can compute the worst possible bound for $\epsilon_i$, as we have
  \begin{align*}
  &\epsilon_i \le  L^2 \cdot \delta_j + L \cdot (2 G)^i \cdot a + 2 G \cdot \epsilon_{i - 1}
  \\
  \implies
  &\epsilon_i \le (80 \cdot L \cdot G)^{2 s(\calC)} \cdot a + 2 G \cdot \epsilon_{i - 1}
  \end{align*}
  implying that
  \[ \epsilon_n \le (160 \cdot L \cdot G)^{3 s(\calC)} \cdot a ~,\]
  hence the lemma follows.
\end{proof}

Overall, for all the analyzed quantities we get
\begin{align*}
  \gamma_n & \le a \cdot (2 G)^{s(\calC)}~, \\
  \delta_n & \le a\cdot(40 L  G)^{s(\calC)} ~,\\
  \epsilon_n & \le a\cdot(160  L  G)^{3 s(\calC)}~, \\
  S_n & \le \frac{2 L}{a} \cdot (2 G)^{s(\calC)}
  ~.
\end{align*}
Setting $a = (160 \cdot L \cdot G)^{3 s(\calC)} \cdot \gamma$ finishes the proof.

\section{Conclusion}\label{sec:conclu}

We have provided lower and upper bounds on the complexity of finding an approximate $(\delta, \epsilon)$-Goldstein stationary point of a Lipschitz function in deterministic nonsmooth and nonconvex optimization.
We have shown that unlike dimension-free randomized algorithms, any deterministic first-order algorithm must suffer from a nontrivial dimension dependence, by establishing a lower bound of $\Omega(d)$ for any dimension $d$, whenever $\delta, \epsilon > 0$ are smaller than given constants. Furthermore, we established the importance of a zeroth-order oracle in deterministic nonsmooth nonconvex optimization,
by proving that any deterministic algorithm that uses only a gradient oracle cannot guarantee to return an adequate point within any finite time.
Both lower bounds stand in contrast to randomized algorithms, as well as deterministic smooth nonconvex and nonsmooth convex settings, emphasizing the unique difficulty of nonsmooth nonconvex optimization.

We have also provided a deterministic algorithm that achieves the best known dimension-free rate with merely a logarithmic smoothness dependence, allowing de-randomization for slightly-smooth functions. 
This motivated the study of deterministic smoothings, in order to apply our algorithm for nonsmooth problems.
We proved that unlike existing randomized smoothings, no efficient deterministic black-box smoothing can provide any meaningful guarantees, providing an answer to an open question raised in the literature. Moreover, we have bypassed this impossibility result in a practical white-box model, providing a deterministic smoothing for a wide variety of widely used neural network architectures which is provably meaningful from an optimization viewpoint. Combined with our algorithm, this yields the first deterministic, dimension-free algorithm for optimizing such networks, circumventing our lower bound.

As to future directions, it is interesting to note that our lower bound for deterministic first-order optimization is linear with respect to the dimension, though we are not aware of any such algorithm with sub-exponential dimension dependence (namely, better than exhaustive grid-search). Therefore, we pose the following question:
\begin{quote}
\textbf{Open problem:} \textit{Is there a deterministic first-order algorithm for nonsmooth nonconvex optimization that returns a $(\delta,\epsilon)$-Goldstein stationary point using $\mathrm{poly}(d, \delta^{-1}, \epsilon^{-1}$) oracle calls?}
\end{quote}

\section*{Acknowledgments}
MJ and TL were supported in part by the Mathematical Data Science program of the Office of Naval Research under grant number N00014-18-1-2764 and by the Vannevar Bush Faculty Fellowship program
under grant number N00014-21-1-2941. MZ was supported by the Army Research Office (ARO) under contract W911NF-17-1-0304 as
part of the collaboration between US DOD, UK MOD and UK Engineering and Physical
Research Council (EPSRC) under the Multidisciplinary University Research Initiative (MURI).
GK and OS were supported in part by the European Research Council (ERC) grant 754705, and by an Israeli Council for Higher Education grant via the Weizmann Data Science Research Center.

\bibliographystyle{plainnat}
\bibliography{ref}

\appendix

\section{Related Work}\label{sec:related_work}
To appreciate the difficulty and the scope of research agenda in nonsmooth nonconvex optimization, we describe the relevant literature. In this context, existing research is mostly devoted to establishing the asymptotic convergence of optimization algorithms, including the gradient sampling (GS) method~\citep{Burke-2002-Approximating, Burke-2002-Two, Burke-2005-Robust, Kiwiel-2007-Convergence, Burke-2020-Gradient}, bundle methods~\citep{Kiwiel-1996-Restricted} and subgradient methods~\citep{Benaim-2005-Stochastic, Davis-2020-Stochastic, Daniilidis-2020-Pathological, Bolte-2021-Conservative}. More specifically,~\citet{Burke-2002-Approximating} provided the systematic investigation of approximating a generalized gradient through a simple yet novel random sampling scheme, motivating the subsequent development of celebrated gradient bundle method~\citep{Burke-2002-Two}. Then,~\citet{Burke-2005-Robust} and~\citet{Kiwiel-2007-Convergence} proposed the modern GS method by incorporating key modifications into the scheme of the aforementioned gradient bundle method and proved that any cluster point of the iterates generated by the GS method is a Clarke stationary point. For an overview of GS methods, we refer to~\citet{Burke-2020-Gradient}.

There has been recent progress in the investigation of different subgradient methods for nonsmooth nonconvex optimization. It was shown by~\citet{Daniilidis-2020-Pathological} that the standard subgradient method fails to find any Clarke stationary point of a Lipschitz function, as witnessed by the existence of pathological examples.~\citet{Benaim-2005-Stochastic} established the asymptotic convergence guarantee of stochastic approximation methods from a differential inclusion point of view under additional conditions and~\citet{Bolte-2021-Conservative} justified \textit{automatic differentiation} as used in deep learning.~\citet{Davis-2020-Stochastic} proved the asymptotic convergence of subgradient methods if the objective function is assumed to be Whitney stratifiable. Turning to nonasymptotic convergence guarantee,~\citet{Zhang-2020-Complexity} proposed a randomized variant of Goldstein's subgradient method and proved a dimension-independent complexity bound of $\widetilde{O}(\delta^{-1}\epsilon^{-3})$ for finding a $(\delta, \epsilon)$-Goldstein stationary point of a Hadamard directionally differentiable function. For the more broad class of Lipschitz functions,~\citet{Davis-2022-Gradient} and~\citet{Tian-2022-Finite} have proposed two other randomized variants of Goldstein's subgradient method and proved the same complexity guarantee. Comparing to their randomized counterparts, deterministic algorithms are relatively scarce in nonsmooth nonconvex optimization.

In convex optimization, we have a deep understanding of the complexity of finding an $\epsilon$-optimal point (i.e., $\x \in \br^d$ such that $f(\x) - \inf_{\x \in \br^d} f(\x) \leq \epsilon$)~\citep{Nemirovski-1983-Problem, Guzman-2015-Lower, Braun-2017-Lower, Nesterov-2018-Lectures}. In smooth nonconvex optimization, various lower bounds have been established for finding an $\epsilon$-stationary point (i.e., $\x \in \br^d$ such that $\|\nabla f(\x)\| \leq \epsilon$)~\citep{Vavasis-1993-Black, Nesterov-2012-Make, Carmon-2020-Lower, Carmon-2021-Lower}. Further extensions to nonconvex stochastic optimization were given in~\citet{Arjevani-2020-Second, Arjevani-2022-Lower} while algorithm-specific lower bounds for finding an $\epsilon$-stationary point were derived in~\citet{Cartis-2010-Complexity, Cartis-2012-Complexity, Cartis-2018-Worst}. However, these proof techniques can not be extended to nonsmooth nonconvex optimization due to different optimality notions. In this vein,~\citet{Zhang-2020-Complexity} and~\citet{Kornowski-2021-Oracle} have demonstrated that neither an $\epsilon$-Clarke stationary point nor a near $\epsilon$-Clarke stationary point can be obtained in a $\textnormal{poly}(d, \epsilon^{-1})$ number of queries when $\epsilon > 0$ is smaller than some constant. Our analysis is inspired by their construction and techniques but focus on establishing lower bounds for finding a $(\delta, \epsilon)$-Goldstein stationary point.   

The smoothing viewpoint starts with~\citet[Theorem~9.7]{Rockafellar-2009-Variational}, which states that any approximate Clarke stationary point of a Lipschitz function is the asymptotic limit of appropriate approximate stationary points of smooth functions. In particular, given a Lipschitz function $f$, we can attempt to construct a smooth function $\tilde{f}$ that is $\delta$-close to $f$ (i.e., $\|f-g\|_\infty \leq \delta$), and apply a smooth optimization algorithm on $\tilde{f}$. Such smoothing approaches have been used in convex optimization~\citep{Nesterov-2005-Smooth, Beck-2012-Smoothing} and found the application in structured nonconvex optimization~\citep{Chen-2012-Smoothing}. For a general Lipschitz function,~\citet{Duchi-2012-Randomized} proposed a randomized smoothing approach that can transform the original problem to a smooth nonconvex optimization where the objective function is given in the expectation form and the smoothness parameter is dimension-dependent. Moreover, there are deterministic smoothing approaches that yield dimension-independent smoothness parameters but they are computationally intractable~\citep{Lasry-1986-Remark, Attouch-1993-Approximation}. Recently,~\citet{Kornowski-2021-Oracle,kornowski2022oracle} have explored the trade-off between computational tractability and smoothing, ruling out the existence of any (possibly randomized) smoothing approach that achieves computational tractability and a dimension-independent smoothness parameter.

\end{document}